\def\1{\bm{1}}
\def\vtheta{{\bm{\theta}}}
\def\vp{{\bm{p}}}
\def\vs{{\bm{s}}}
\def\vu{{\bm{u}}}
\def\vv{{\bm{v}}}
\def\vx{{\bm{x}}}
\def\vy{{\bm{y}}}
\def\evtheta{{\theta}}
\def\evp{{p}}
\def\evy{{y}}
\def\mA{{\bm{A}}}
\def\mB{{\bm{B}}}
\DeclareMathAlphabet{\mathsfit}{\encodingdefault}{\sfdefault}{m}{sl}
\SetMathAlphabet{\mathsfit}{bold}{\encodingdefault}{\sfdefault}{bx}{n}
\newcommand{\E}{\mathbb{E}}
\DeclareMathOperator*{\argmin}{arg\,min}
\newtheorem{thm}{Theorem}
\newtheorem{lemma}{Lemma}
\def\ie{\textit{i.e.,~}}
\def\eg{\textit{e.g.,~}}
\def\wrt{\textit{w.r.t.~}}
\def\vs{\textit{v.s.~}}
\def\resp{\textit{resp.~}}
\def\sota{state-of-the-art~}
\crefname{equation}{Eq.}{Eq.}
\crefname{figure}{Figure}{Fig.}
\crefname{table}{Table}{Table~}
\crefname{section}{Section}{Section~}
\crefname{algorithm}{Algorithm}{Algorithm~}
\crefname{thm}{Theorem}{Theorem~}
\begin{document}

% If your paper is accepted and the title of your paper is very long,
% the style will print as headings an error message. Use the following
% command to supply a shorter title of your paper so that it can be
% used as headings.
%
%\runningtitle{I use this title instead because the last one was very long}

% If your paper is accepted and the number of authors is large, the
% style will print as headings an error message. Use the following
% command to supply a shorter version of the authors names so that
% they can be used as headings (for example, use only the surnames)
%
\runningauthor{Xin-Yu Zhang, Taihong Xiao, Haolin Jia, Ming-Ming Cheng, Ming-Hsuan Yang}

\twocolumn[

\aistatstitle{Semi-Supervised Learning with Meta-Gradient}

\aistatsauthor{ Xin-Yu Zhang$^*$ \And Taihong Xiao$^*$  \And  Haolin Jia }

\aistatsaddress{ TKLNDST, CS, Nankai University \And  University of California, Merced \And Tongji University }

\aistatsauthor{ Ming-Ming Cheng \And Ming-Hsuan Yang }

\aistatsaddress{ TKLNDST, CS, Nankai University \And  University of California, Merced }

]

\begin{abstract}
In this work, we propose a simple yet effective meta-learning algorithm in semi-supervised learning. We notice that most existing consistency-based approaches suffer from overfitting and limited model generalization ability, especially when training with only a small number of labeled data. To alleviate this issue, we propose a learn-to-generalize regularization term by utilizing the label information and optimize the problem in a meta-learning fashion. Specifically, we seek the pseudo labels of the unlabeled data so that the model can generalize well on the labeled data, which is formulated as a nested optimization problem. We address this problem using the meta-gradient that bridges between the pseudo label and the regularization term. In addition, we introduce a simple first-order approximation to avoid computing higher-order derivatives and provide theoretic convergence analysis. Extensive evaluations on the SVHN, CIFAR, and ImageNet datasets demonstrate that the proposed algorithm performs favorably against \sota methods.
%
% The training sources will be released upon acceptance.
\end{abstract}

\section{Introduction}
The rapid advances of deep neural networks can be in part 
attributed to the availability of large-scale datasets with extensive annotations, which require considerable human labor.
However, a typical real-world scenario 
is that only a small amount of data has the
corresponding annotations while the majority of training examples are unlabeled.
Numerous semi-supervised learning (SSL) methods have since been developed in which
the unlabeled data are exploited to facilitate generalizing the learned models.

Existing SSL algorithms include co-training \citep{de1994learning,blum1998combining},
label propagation \citep{szummer2002partially}, graph regularization \citep{blum2001learning},
and the consistency-enforcing approaches \citep{rasmus2015semi,sajjadi2016regularization,
Laine2017temporal,miyato2018virtual,tarvainen2017mean,athiwaratkun2018there,yu2019tangent}.
Notably, the consistency-based approaches treat SSL as a generalization problem
and enforce consistent predictions
against small perturbations of the input data or model parameters.
The basic assumption is that similar training examples are more likely to
belong to the same category, so that the predictions of the network
in multiple passes should be consistent \citep{sajjadi2016regularization}.
As such, the consistency-based approaches are essentially designing pseudo labels
from the predictions of the same input signals,
while the incorrect predictions may misguide the training process. 
To improve the quality of the pseudo labels, two orthogonal directions,
\ie dedicating to carefully designed perturbations \citep{miyato2018virtual,yu2019tangent}
and delving into better role models \citep{Laine2017temporal,tarvainen2017mean}, 
have been introduced. 
%
% Beyond this, Laine \etal \citep{Laine2017temporal} and Tarvainen \etal \citep{tarvainen2017mean}
% borrow the idea of ensemble learning and distill knowledge from the predictions
% learned in ensemble to the student network.
%
Aside from the aforementioned approaches, \citet{athiwaratkun2018there} analyze the training dynamics of the models trained with the consistency regularization
and propose a variant of the stochastic weight averaging (SWA)~\citep{izmailov2018averaging}, \ie fastSWA, to improve performance and accelerate convergence.

From the above, we can see that the label information in most SSL methods is commonly used for pseudo labeling or label propagation especially for the unlabeled data apart from providing the ground truth for the labeled data. 
Either pseudo labeling or label propagation is based on the assumption of inner structure of the data manifold. For example, the regularization terms in $\Pi$-Model~\citep{Laine2017temporal} and VAT~\citep{miyato2018virtual} are assuming that the decision boundary should be flat near the data. However, these consistency regularization terms are quite empirical and generic, where the given label information is not explicitly exploited in the consistency regularization. As a result, the model may overfit to the limited labeled data at a bad local minimum, which results in limited model generalization ability.

% Viewed from another perspective, the consistency-based algorithms
% can be broadly formulated as generating pseudo labels for the training
% examples, especially for the unlabeled data, in a way that
% the model can generalize well on the unseen examples.
%
% Previous work improves the quality of the pseudo labels by delving into either
% the representation perturbations \citep{miyato2018virtual} or ways of generating
% pseudo-labels in ensemble \citep{Laine2017temporal,tarvainen2017mean}.
%
% However, the label information, which is the only task-specific information for SSL,
% is not well exploited in the previous consistency regularization. It is common that the label is used for propagation of 
% % 
% For example, the regularization terms in $\Pi$-Model~[6] and VAT~[8] are based on the assumption that the decision boundary should be flat near the data.  However, these consistency regularization terms are unsupervised and generic, that is, not specially designed for the underlying task, which limits the generalization ability and performance of the learned model.

To alleviate this issue, we propose a meta-learning algorithm in which the pseudo labels are designed explicitly for generalization on the task of interest. Specifically, We regard the labeled data as a validation set, and generate pseudo-labels of the unlabeled data by minimizing the validation loss. Thereby, the label information in the validation loss can influence the pseudo-labels via the meta-gradients to influence the end-to-end training of the model parameters. In this way, the validation loss term, as the proposed new regularization term in this paper, explicitly includes the label information and drives the model towards better generalization ability, as indicated by the decrease of validation loss (see \cref{thm:convergence}).
We further introduce a simple first-order approximation to alleviate the issue of computing higher-order derivatives, and an improved training protocol to address the sample bias problem.
Under mild conditions, the proposed meta-learning algorithm enjoys a convergence rate of $O(1/\epsilon^2)$,
which is identical to that of the stochastic gradient descent (SGD) algorithm.
% The convergence of the exact meta-learning algorithm is guaranteed
% under mild conditions.
%
Extensive experimental results demonstrate that our method performs favorably against the \sota approaches on the SVHN~\citep{netzer2011reading}, CIFAR~\citep{krizhevsky2009learning}, and ImageNet~\citep{ILSVRC15} datasets, and the ablation studies validate the effectiveness of each component of our approach.

\section{Related Work}
% \vspace{-3pt}

\paragraph{Consistency-based Semi-Supervised Learning.}
The consistency regularization term measures the discrepancy between
the predictions and the pseudo labels,
which are typically generated by the same data
with small perturbations on the input signals \citep{sajjadi2016regularization,miyato2018virtual}
or model parameters \citep{tarvainen2017mean}.
For the $\Pi$-model \citep{sajjadi2016regularization}, the predictions and pseudo labels
are generated by the same model with different data augmentations through different forward passes\footnote{Due to the existence of randomized operations, such as dropout \citep{hinton2012improving}
and shake-shake regularization \citep{Gastaldi17ShakeShake},
the outputs of the same input signal may be different in multiple forward passes.}.
\citet{Laine2017temporal} propose the temporal ensembling approach to improve the quality of pseudo labels by keeping an exponential moving average (EMA) of the history predictions of each training example.
However, the scalability of this method is limited since the memory footprint grows linearly with the number of training examples.
Instead, \citet{tarvainen2017mean} present the mean teacher method to track the model parameters and generate pseudo labels using the teacher model parameterized by the EMA of the history model parameters.
On the other hand, \citet{miyato2018virtual} present the virtual adversarial training scheme to focus on disturbing the input data in an adversarial direction, and \citet{yu2019tangent} decouple the adversarial direction into
the tangent and normal directions of the embedded training data manifold.
With the dedicated perturbation directions, the robustness of the learned model can be significantly improved.
Aside from the above-mentioned methods, \citet{athiwaratkun2018there}
introduce the fastSWA method to average the model parameters along the training timeline.

Recently, there is a line of research focusing on mitigating the self-supervised algorithms to SSL,
in which heavy data augmentation is used to promote the performance \citep{berthelot2019mixmatch,berthelot2019remixmatch,sohn2020fixmatch}.
For example, MixMatch \citep{berthelot2019mixmatch} enforces consistent predictions between the differently-augmented training data,
and ReMixMatch \citep{berthelot2019remixmatch} further introduces an automatic augmentation strategy to effectively generate the strongly-augmented data,
for which the consistency term is enforced.
Since our approach adopts a simple \textit{mixup} augmentation, the direct comparison with these methods is unfair.
The integration of our method with more sophisticated data augmentation is left for future work.

We notice that existing SSL methods do not exploit the label information
when computing the consistency regularization, which leads to limited model generalization ability.
To alleviate this, we relate the consistency loss with the label information
by unfolding and differentiating through one optimization step.
In this way, the update of the pseudo labels is guided by the meta-gradients of the labeled data loss,
and the consistency loss is designed to improve the generalization ability
specially for the underlying task.
% or ongoing task?
% propose to take the task-specific information into consideration, and design the pseudo labels explicitly for generalization.
%
%The favorable performance over the \sota methods demonstrates the important role
%of labeled examples in the consistency regularization.
We also experimentally verify the important role of the label information
% that the labeled examples play an important role
in the effectiveness of consistency regularization.
% Moreover, we provide a theoretical convergence guarantee of the proposed algorithm
% under mild conditions, making our method more principled than the prior arts.

\paragraph{Optimization-based Meta-Learning.}
%The motivation of training on unlabeled examples with labeled data loss for SSL methods
%mainly comes from the optimization-based meta-learning algorithms
%This work is motivated by recent advances of
Numerous optimization-based meta-learning algorithms
\citep{pmlr-v70-finn17a,grant2018recasting,finn2018probabilistic,yoon2018bayesian,rajeswaran2019meta} have been developed in recent years. 
Notably, \citet{pmlr-v70-finn17a} formulate the  meta-learning problem in a nested optimization format,
where the inner loop imitates the process of adaptation,
while the outer loop focuses on optimizing the meta-objective.
The inner-optimization is further replaced by a single SGD step so that the meta-objective can be optimized in an end-to-end manner.
% To alleviate the heavy computational cost, MAML further replaces the inner-optimization
% with a single SGD step so that the meta-objective can be optimized in an end-to-end manner.
%
Thanks to its simplicity and effectiveness, optimization-based meta-learning algorithms
have been applied to a wide range of vision and learning problems
including example re-weighting \citep{ren18l2rw},
neural architecture search \citep{liu2018darts},
and unrolled generative models \citep{unrolled2017metz}.
In this work, we develop a meta-learning algorithm in the semi-supervised settings
%
% The meta-learning technique addresses the aforementioned challenge of relating
% the consistency loss with the label information.
and demonstrate the potential of meta-learning for these tasks.
In addition, we present the theoretical convergence analysis of the proposed algorithm.
%
% To be specific, we prove that under mild conditions, the meta-learning algorithm is guaranteed
% to converge and enjoys the same convergence rate as the regular SGD.
% our exact meta-learning algorithm is guaranteed to
% converge to a critical point of the meta-objective under mild conditions.
%
% We demonstrate that meta-learning is a fruitful research direction for SSL.
% Different from existing approaches, however, our algorithm brings only one additional hyper-parameter,
% \ie the meta learning rate, and is guaranteed to converge to a critical point of the meta-objective
% under mild conditions. % (See \cref{}.)

A concurrent work \citep{wang2020meta} proposes a similar meta-learning algorithm for SSL.
Different from ours, however, their method learns a weight for each unlabeled data,
and the pseudo-labels are inherited from previous consistency-based methods (\eg \citet{tarvainen2017mean}).
We instead infer the pseudo-labels within the meta-learning framework,
making our method relatively independent of the previous consistency-based counterparts.

% \vspace{-3pt}
\section{Proposed Algorithm}
% \vspace{-3pt}

In a typical semi-supervised setting, we are given a few labeled data
$\mathcal{D}^l = \{(\vx_{k}^{l}, \vy_{k}):k = 1,\cdots,N^{l}\}$ and
a large amount of unlabeled data
$\mathcal{D}^u = \{\vx_{i}^{u}:i=1,\cdots,N^{u}\}$,
where $N^{l} \ll N^{u}$.
The goal is to train a classifier that generalizes well on the unseen test data drawn from the same distribution.
In the following, we present the algorithmic details.
For presentation clarity, a table of notations is provided in the supplementary material. 
%MH: no need to do this
%We develop an exact meta-learning algorithm in %\cref{sec:learning-to-generalize}, introduce the first-order approximation in \cref{sec:first-order}, and formulate the full algorithm with the improved training protocol in \cref{sec:mix-up}.
%
%Finally, we analyze the convergence behavior of the exact meta-learning algorithm in \cref{sec:convergence}.
%
%For presentation clarity, a table of notations is provided in the supplementary materials.

\subsection{Learning to Generalize}
\label{sec:learning-to-generalize}
% The main challenge of SSL lies in how to sufficiently utilize the unlabeled data.
% %
% In this work, we develop a learning-to-generalize algorithm to assign proximal labels for the unlabeled examples so that the model can generalize well on the labeled data.
%
% Concretely, we treat the labeled data as a validation set and learn to assign labels
% of the unlabeled data so that the classifier trained on the assigned pseudo-labels has
% minimal loss on the validation set.
% 
Let $f(\vx; \vtheta)$ be a generic classifier parameterized by $\vtheta$ and $\Phi(\vp,\vy)$ be a non-negative function that measures the discrepancy between
distributions $\vp$ and $\vy$.
We further assume $\Phi(\vp,\vy) = 0$ if and only if $\vp = \vy$, and thus
if $\vy$ is fixed, then $\vp = \vy$ is the (global) minima of the function $\Phi(\cdot,\vy)$.
We formulate the loss of $(\vx, \vy)$ as
$\mathcal{L}(\vx, \vy; \vtheta) = \Phi(f(\vx; \vtheta),\vy)$.
The learning-to-generalize problem is then formulated as following:
\begin{equation}
    \begin{split}
        \min_{\mathcal{Y}}&~\sum_{k=1}^{N^{l}}
        \mathcal{L}(\vx_{k}^{l}, \vy_{k}; \vtheta^{*}(\mathcal{Y})) \\
        \mbox{s.t.}&~\vtheta^{*}(\mathcal{Y}) = \argmin_{\theta}
        \sum_{i=1}^{N^{u}} \mathcal{L}(\vx_{i}^{u}, \widehat{\vy}_{i}; \vtheta),
    \end{split}
    \label{eqn:orig-formulation}
\end{equation}
where $\mathcal{Y} = \{\widehat{\vy}_{i}:i=1,\cdots,N^{u}\}$ denotes
the pseudo labels of the unlabeled data.
Here we consider the training labeled data $\{\vx_k^l: k=1,\ldots, N^l\}$ as the validation set and seek the optimal pseudo labels $\mathcal{Y}$ by minimizing the validation loss on the labeled data. 
However, the validation loss also depends on the optimal model parameters that are obtained by minimizing the unlabeled data loss using pseudo labels $\mathcal{Y}$.
Thus the above formulation is a nest optimization problem.

Solving the nested minimization problem exactly is computationally prohibitive
because calculating the gradients of the outer loop requires an entire inner optimization.
Thus, we online approximate the outer loop gradients in a way similar to \citet{ren18l2rw}.
Specifically, we adapt the generated pseudo labels based on the current mini-batch and
replace the inner optimization with a single SGD step.
As such, the descent direction of the pseudo labels is guided by the back-propagated signals of the labeled data loss.

Consider the gradient-based deep learning framework in which gradients are calculated at the mini-batch level and the SGD-like optimizer is used to update model parameters.
With a little bit abuse of notations, at the $t^{th}$ training step, a mini-batch of labeled data $\{(\vx_{k}^{l}, \vy_{k}):k=1,\cdots,B^{l}\}$ and
a mini-batch of unlabeled data $\{\vx_{i}^{u}:i=1,\cdots,B^{u}\}$ are sampled,
where $B^{l}$ and $B^{u}$ denote the batch sizes of the labeled data and unlabeled
data, respectively.
The pseudo labels of the unlabeled data are initialized as the current predictions of the classifier:
\begin{equation}
    \widetilde{\vy}_{i} = f(\vx_{i}^{u}; \vtheta_t).
    \label{eqn:init}
\end{equation}
We then compute the unlabeled data loss and the gradient \wrt the model parameters:
\begin{equation}
    \begin{split}
        \mathcal{L}(\vx_{i}^{u}, \widetilde{\vy}_{i}; \vtheta_t) &=
        \Phi(f(\vx_{i}^{u}; \vtheta_t),\widetilde{\vy}_{i}), \\
        \nabla \vtheta_t &= \frac{1}{B^{u}} \sum_{i=1}^{B^{u}} \nabla_{\vtheta}
        \mathcal{L}(\vx_{i}^{u}, \widetilde{\vy}_{i}; \vtheta_t).
    \end{split}
    \label{eqn:unlabel-first}
\end{equation}
Note that since the initialized pseudo labels are precisely the predictions of the classifier,
the unlabeled loss achieves minimum value - zero. Thus the gradient is zero, \ie $\vtheta_t=0$.
However, the Jacobian matrix of $\nabla \vtheta_t$ \wrt the pseudo labels is not necessarily
a zero matrix, thus making optimization via differentiating $\nabla \vtheta_t$ possible.
  
We apply one SGD step on the model parameters:
\begin{equation}
    \widetilde{\vtheta}_{t+1} = \vtheta_t - \alpha_{t} \nabla \vtheta_t,
    \label{eqn:pseudo-update}
\end{equation}
where $\alpha_{t}$ is the learning rate of the inner loop.
The SGD step is then evaluated on the labeled data
and the labeled data loss is treated as the meta-objective.
We differentiate the meta-objective through the SGD step and compute the meta-gradient \wrt the pseudo labels:
\begin{equation}
    \begin{split}
        \mathcal{L}(\vx_{k}^{l}, \vy_{k}; \widetilde{\vtheta}_{t+1}) &=
        \Phi(f(\vx_{k}^{l}; \widetilde{\vtheta}_{t+1}),\vy_{k}), \\
        \nabla \widetilde{\vy}_{i} &= \frac{1}{B^{l}}
        \sum_{k=1}^{B^{l}} \nabla_{\widetilde{\vy}_{i}}
        \mathcal{L}(\vx_{k}^{l}, \vy_{k}; \widetilde{\vtheta}_{t+1}).
    \end{split}
    \label{eqn:label}
\end{equation}

Note that by unfolding one SGD step, the labeled data loss is
related to the pseudo labels of the unlabeled data.
Moreover, since the labeled data loss serves as the meta-objective to be differentiated,
the update of the pseudo labels is guided by the label information, \ie the meta-gradients,
and thus concerns the specific task on interest.
Similar techniques are developed in the optimization-based meta-learning literature
\citep{pmlr-v70-finn17a} and employed in a wide range of applications
\citep{ren18l2rw,liu2018darts,unrolled2017metz,liu2019generative}.

\begin{figure*}
    \centering
\begin{minipage}[!t]{0.46\textwidth}
    \null
    \begin{algorithm}[H]
        % \LinesNumbered
        \setstretch{1.25}
        \caption{Meta-Learning Algorithm.}
        \label{alg:exact}
        \KwIn{regular learning rates $\{ \alpha_{t} \}$, \\
              \hspace{27pt} meta learning rates $\{ \beta_{t} \}$}
        \For{$t := 1$ \textrm{to \#iters}}
        {
        $\{(\vx_{k}^{l}, \vy_{k})\}_{k=1}^{B^{l}} \leftarrow \text{BatchSampler}(\mathcal{D}^{l})$ \\
        % \algorithmiccomment{Sample a mini-batch of labeled data} \\
        %
        $\{\vx_{i}^{u}\}_{i=1}^{B^{u}} \leftarrow \text{BatchSampler}(\mathcal{D}^{u})$ \\
        % \algorithmiccomment{Sample a mini-batch of unlabeled data} \\
        %
        $\widetilde{\vy}_{i} = f(\vx_{i}^{u}; \vtheta_t)$ \\
        % \algorithmiccomment{Initialize the pseudo labels} \\
        %
        $\mathcal{L}(\vx_{i}^{u}, \widetilde{\vy}_{i}; \vtheta_t) = \Phi(f(\vx_{i}^{u}; \vtheta_t),\widetilde{\vy}_{i})$ \\
        % \algorithmiccomment{Compute unlabeled loss ($\mathcal{L}(\vx_{i}^{u}, \widetilde{\vy}_{i}; \vtheta_t) = 0$)} \\
        %
        $\nabla \vtheta_t = \frac{1}{B^{u}} \sum_{i=1}^{B^{u}} \nabla_{\vtheta} \mathcal{L}(\vx_{i}^{u}, \widetilde{\vy}_{i}; \vtheta_t)$ \\
        % \algorithmiccomment{Gradient back-propagation ($\nabla \vtheta_t = 0$)} \\
        % 
        $\widetilde{\vtheta}_{t+1} = \vtheta_t - \alpha_{t} \nabla \vtheta_t$ \\
        % \algorithmiccomment{One SGD step on model parameters ($\widetilde{\vtheta}_{t+1} = \vtheta_t$)} \\
        % 
        $\mathcal{L}(\vx_{k}^{l}, \vy_{k}; \widetilde{\vtheta}_{t+1}) = \Phi(f(\vx_{k}^{l}; \widetilde{\vtheta}_{t+1}),\vy_{k})$ \\
        % \algorithmiccomment{Validate on the labeled data} \\
        % 
        $\nabla \widetilde{\vy}_{i} = \frac{1}{B^{l}} \sum_{k=1}^{B^{l}} \nabla_{\widetilde{\vy}_{i}} \mathcal{L}(\vx_{k}^{l}, \vy_{k}; \widetilde{\vtheta}_{t+1})$ \\
        % \algorithmiccomment{Compute gradients of pseudo labels} \\
        % 
        $\widehat{\vy}_i = \widetilde{\vy}_i - \beta_{t} \nabla \widetilde{\vy}_{i}$ \\
        % \algorithmiccomment{One SGD step on pseudo labels}\label{step:start} \\
        % 
        $\mathcal{L}(\vx_{i}^{u}, \widehat{\vy}_i; \vtheta_t) = \Phi(f(\vx_{i}^{u}; \vtheta_t),\widehat{\vy}_i)$ \\
        % \algorithmiccomment{Compute unlabeled loss with updated pseudo labels} \\
        % 
        $\nabla \widehat{\vtheta}_t = \frac{1}{B^{u}} \sum_{i=1}^{B^{u}} \nabla_{\vtheta} \mathcal{L}(\vx_{i}^{u}, \widehat{\vy}_{i}; \vtheta_t)$ \\
        % \algorithmiccomment{Gradient back-propagation} \\
        % 
        $\vtheta_{t+1} = \mbox{Optimizer}(\vtheta_t, \nabla \widehat{\vtheta}_t, \alpha_{t})$
        % \algorithmiccomment{Update model parameters}\label{step:end}
        }
    \end{algorithm}
\end{minipage}
\hfill
\resizebox{.50\textwidth}{!}{%%
\begin{minipage}[H]{0.58\textwidth}
    \null
    \begin{algorithm}[H]
        \setstretch{1.25}
        \caption{Algorithm with Mix-Up Augmentation.}
        \label{alg:first-order}
        \KwIn{regular learning rates $\{ \alpha_{t} \}$, \\
        \hspace{27pt} meta learning rates $\{ \beta_{t} \}$}
        \For{$t := 1$ \textrm{to \#iters}}
        {
            $\{(\vx_{k}^{l}, \vy_{k})\}_{k=1}^{B} \leftarrow
            \text{BatchSampler}(\mathcal{D}^{l})$ \\
            % \algorithmiccomment{Sample a mini-batch of labeled data} \\
            %
            $\{\vx_{i}^{u}\}_{i=1}^{B} \leftarrow \text{BatchSampler}(\mathcal{D}^{u})$ \\
            % \algorithmiccomment{Sample a mini-batch of unlabeled data} \\
            %
            $\widetilde{\vy}_{i} = f(\vx_{i}^{u}; \vtheta_t)$ \\
            % \algorithmiccomment{Initialize the pseudo labels} \\
            %
            $\mathcal{L}^{\text{KL}}(\vx_{k}^{l}, \vy_{k}; \vtheta_t) =
            \Phi^{\text{KL}}(f(\vx_{k}^{l}; \vtheta_t),\vy_{k})$ \\
            % \algorithmiccomment{Compute labeled loss} \\
            %
            $\nabla \vtheta_t^l = \frac{1}{B} \sum_{k=1}^{B} \nabla_{\vtheta}
            \mathcal{L}^{\text{KL}}(\vx_{k}^{l}, \vy_{k}; \widetilde{\vtheta}_{t+1})$ \\
            % \algorithmiccomment{Compute gradients of labeled loss} \\
            %
            $\epsilon = 0.01 \|\nabla \vtheta_t^l\|^{-1}$ \\
            % \algorithmiccomment{Compute the step size for first-order approximation} \\
            %
            $\nabla \widetilde{\vy}_{i} = \epsilon^{-1}
            \left(f(\vx_{i}^{u}; \vtheta_t+\epsilon\nabla \vtheta_t^l) -
            f(\vx_{i}^{u}; \vtheta_t-\epsilon\nabla \vtheta_t^l)\right)$ \\
            % \algorithmiccomment{Compute gradients of pseudo labels} \\
            %
            $\widehat{\vy}_i = \widetilde{\vy}_i - \beta_{t} \nabla \widetilde{\vy}_{i}$ \\
            % \algorithmiccomment{One SGD step on pseudo labels} \\
            %
            $\lambda_{i} \leftarrow \mbox{Beta}(\gamma, \gamma)$ \\
            % \algorithmiccomment{Sample from Beta distribution} \\
            %
            $\vx_{i}^{\text{in}} = \lambda_i \vx_{i}^{l} + (1 - \lambda_i) \vx_{i}^{u}$ \\
            $\vy_{i}^{\text{in}} = \lambda_i \vy_{i} + (1 - \lambda_i) \widehat{\vy}_{i}$ \\
            % \algorithmiccomment{Adopt Mixup augmentation} \\
            $\mathcal{L}^{\text{KL}}_{\text{cls}}(\vx_{i}^{\text{in}}, \vy_{i}^{\text{in}};
            \vtheta_t) = \Phi^{\text{KL}}(f(\vx_{i}^{\text{in}}; 
            \vtheta_t),\vy_{i}^{\text{in}})$ \\
            % \algorithmiccomment{Compute interpolated data loss} \\
            %
            $\mathcal{L}^{\text{MSE}}_{\text{cons}}(\vx_{i}^{u}, \widehat{\vy}_i; \vtheta_t) =
            \Phi^{\text{MSE}}(f(\vx_{i}^{u}; \vtheta_t),\widehat{\vy}_i)$ \\
            % \algorithmiccomment{Compute consistency regularization} \\
            %
            $\nabla \widehat{\vtheta}_t = \frac{1}{B} \sum_{i=1}^{B}
            \left( \nabla_{\vtheta} \mathcal{L}^{\text{KL}}_{\text{cls}}
            + \nabla_{\vtheta} \mathcal{L}^{\text{MSE}}_{\text{cons}} \right)$ \\
            % \algorithmiccomment{Back-propagation} \\
            %
            $\vtheta_{t+1} = \mbox{Optimizer}(\vtheta_t, \nabla \widehat{\vtheta}_t, \alpha_{t})$
            % \algorithmiccomment{Update model parameters}
        }
    \end{algorithm}
\end{minipage}%
}

\end{figure*}

Finally, we perform one SGD step on the pseudo labels, 
\begin{equation}
    \widehat{\vy}_i = \widetilde{\vy}_i - \beta_{t} \nabla \widetilde{\vy}_{i},
   \label{eqn:update-pseudo labels}
\end{equation}
where $\beta_{t}$ is the meta learning rate,
and compute the consistency loss from the unlabeled data and the updated pseudo labels.
The meta-learning algorithm is summarized in \cref{alg:exact}.

\subsection{First-Order Approximation}
\label{sec:first-order}
In \cref{sec:learning-to-generalize}, the most computationally expensive operation is
differentiation through the SGD step in \cref{eqn:label},
as the second-order derivative is involved.
To avoid this, we apply the chain rule to the second-order derivative by substituting \cref{eqn:unlabel-first} and \cref{eqn:pseudo-update} into \cref{eqn:label}:
\begin{equation}\label{eqn:chain-rule}
\begin{split}
    &\frac{\partial \mathcal{L}}{\partial \widetilde{\evy}_{i,j}}(\vx_{k}^{l}, \vy_{k}; \widetilde{\vtheta}_{t+1}) \\
    &\phantom{L}= - \frac{\alpha_{t}}{B^{u}} \nabla_{\vtheta}^{\top} \frac{\partial \mathcal{L}}{\partial\widetilde{\evy}_{i,j}}
    (\vx_{i}^{u}, \widetilde{\vy}_{i}; \vtheta_t) \cdot \nabla_{\vtheta} \mathcal{L}(\vx_{k}^{l}, \vy_{k}; \vtheta_t).
\end{split}
\end{equation}
The gradient of the validation loss \wrt the pseudo labels can thus be formulated as
\begin{equation}\label{eqn:label-grad}
\resizebox{0.48\textwidth}{!}{$
\begin{aligned}
    \nabla \widetilde{\evy}_{i,j} =& \frac{1}{B^{l}} \sum_{k=1}^{B^{l}}
    \frac{\partial \mathcal{L}}{\partial \widetilde{\evy}_{i,j}} 
    (\vx_{k}^{l}, \vy_{k}; \vtheta_t) \\
    =& - \frac{\alpha_{t}}{B^{u}} \nabla_{\vtheta}^{\top}
    \frac{\partial \mathcal{L}}{\partial \widetilde{\evy}_{i,j}}
    (\vx_{i}^{u}, \widetilde{\vy}_{i}; \vtheta_t) \cdot 
    \left( \frac{1}{B^{l}} \sum_{k=1}^{B^{l}} \nabla_{\vtheta} \mathcal{L}
    (\vx_{k}^{l}, \vy_{k}; \vtheta_t) \right).
\end{aligned}
$}
\end{equation}
Let
\begin{equation}
    \nabla \vtheta_t^l = \frac{1}{B^{l}} \sum_{i=1}^{B^{l}} \nabla_{\vtheta} \mathcal{L}
    (\vx_{i}^{l}, \vy_{i}; \vtheta_t),
    \label{eqn:normal-grad}
\end{equation}
and then it can be easily shown with Taylor expansion that as $\epsilon \to 0$,
\begin{equation}\label{eqn:taylor}
\resizebox{0.48\textwidth}{!}{$
    \nabla \widetilde{\evy}_{i,j} = - \frac{\alpha_{t}}{2B^{u}\epsilon}
    \left( \frac{\partial \mathcal{L}}{\partial \widetilde{\evy}_{i,j}} 
    (\vx_{i}^{u}, \widetilde{\vy}_{i}; \vtheta_t + \epsilon \nabla \vtheta_t^l) 
    -\frac{\partial \mathcal{L}}{\partial \widetilde{\evy}_{i,j}} 
    (\vx_{i}^{u}, \widetilde{\vy}_{i}; \vtheta_t - \epsilon \nabla \vtheta_t^l) \right).
$}
\end{equation}
Thus, we adopt the first-order approximation and use a sufficiently small $\epsilon$ to approximate $\nabla \widetilde{\evy}_{i,j}$.
As suggested in \citet{liu2018darts}, we use $\epsilon = 0.01 / \| \nabla \vtheta_t^l \|_2$
in this work.

Furthermore, the gradients \wrt the pseudo labels can be calculated in the closed form.
Here, following the common practice of consistency-based SSL~\citep{tarvainen2017mean}, we adopt the Kullback--Leibler divergence loss
$\Phi^{\text{KL}}(\vp, \vy) = \sum_{n} \evy_n \log (\evy_n / \evp_n)$
as the regular labeled data loss, and the mean squared error (MSE) loss
$\Phi^{\text{MSE}}(\vp, \vy) = \|\vp - \vy\|_2^2$ for the consistency loss.
% Here, we consider two metrics of distribution discrepancy, namely, the
% Kullback--Leibler divergence $\Phi^{\text{KL}}(\vp, \vy) = \sum_{n} \evy_n \log{\frac{\evy_n}{\evp_n}}$
% and the Mean Squared Error (MSE) $\Phi^{\text{MSE}}(\vp, \vy) = \|\vp - \vy\|_2^2$.
%
For the MSE loss, the gradients \wrt the pseudo labels are approximated by:
\begin{equation}
    \nabla \widetilde{\vy}_{i} \approx \frac{\alpha_{t}}{B^{u}\epsilon} \left( f(\vx_{i}^{u};\vtheta_t +
    \epsilon \nabla \vtheta_t^l) - f(\vx_{i}^{u};\vtheta_t - \epsilon \nabla \vtheta_t^l) \right).
    \label{eqn:pseudo-update-elementwise}
\end{equation}

\subsection{Improved Training Protocol}
\label{sec:mix-up}
    
The above-discussed meta-learning algorithm utilizes the unlabeled examples to improve the generalization ability.
However, there still remains the sampling bias issue in SSL.
Motivated by the success of the \textit{mixup} augmentation \citep{zhang2018mixup} in SSL \citep{wang2019semi,berthelot2019mixmatch},
we incorporate the cross-domain mixup augmentation in the proposed meta-learning algorithm.
We first assume that the mini-batches of labeled and unlabeled data are of the same batch size, \ie $B^{l} = B^{u} = B$, and then interpolate between each pair of labeled and unlabeled examples to generate new training data.
Note that when generating the corresponding labels, we interpolate between
the actual labels $\vy_{i}$ of the labeled examples and the updated pseudo labels $\widehat{\vy}_{i}$ of the unlabeled examples, 
\begin{equation}\label{eqn:mix-up}
    \begin{aligned}
        \vx_{i}^{\text{in}} &= \lambda_{i} \vx_{i}^{l} + (1 - \lambda_{i}) \vx_{i}^{u}, \\
        \vy_{i}^{\text{in}} &= \lambda_{i} \vy_{i} + (1 - \lambda_{i}) \widehat{\vy}_{i},
    \end{aligned}
    \quad
    i = 1, \cdots, B,
\end{equation}
where $\lambda_{1}, \cdots, \lambda_{B}$ are \textit{i.i.d.} samples drawn from the
$\mbox{Beta}(\gamma, \gamma)$ distribution.
Finally, the total loss is formulated as
\begin{equation}
    \mathcal{L} = \underbrace{ \sum_{i=1}^{B} \mathcal{L}^{\text{KL}}(\vx_{i}^{\text{in}},\vy_{i}^{\text{in}};\vtheta_{t}) }_{\text{classification loss}} +
    \underbrace{ \sum_{i=1}^{B} \mathcal{L}^{\text{MSE}}(\vx_{i}^{u},\widehat{\vy}_{i};\vtheta_{t}) }_{\text{consistency loss}},
    \label{eqn:total-loss}
\end{equation}
and the algorithm with first-order approximation and mixup augmentation
is illustrated in \cref{alg:first-order}.

\subsection{Convergence Analysis}
\label{sec:convergence}
In this section, we present the convergence analysis of \cref{alg:exact}.
Due to of the scarcity of labeled examples, we assume all labeled data are sampled at each step, \ie $B^{l} = N^{l}$, and that the MSE loss is used in the unlabeled consistency loss (\cref{eqn:unlabel-first}).
Under mild conditions, we show that \cref{alg:exact} is guaranteed to converge to a critical point of the meta-objective (\cref{thm:convergence}),
and enjoys a convergence rate of $O(1/\epsilon^2)$
(\cref{thm:convergence-rate}), which is the same as the regular SGD.
The proofs are presented in the supplementary material.

\begin{thm}
    Let
    \begin{equation}
      G(\vtheta;\mathcal{D}^{l}) = \frac{1}{N^{l}} \sum_{k=1}^{N^{l}} \mathcal{L}(\vx_{k}^{l}, \vy_{k}; \vtheta_t)
    \end{equation}
    be the loss function of the labeled examples.
    Assume 
    \begin{enumerate}[label=(\roman*),itemindent=0pt]
        \item the gradient function $\nabla_{\vtheta}G$ is Lipschitz-continuous with a Lipschitz constant $L_0$;
        and
        \item the norm of the Jacobian matrix of $f$ \wrt $\vtheta$ is upper-bounded by a constant $M$, \ie
        \begin{equation}
          \left\| J_{\vtheta} f(\vx_{i}^{u}; \vtheta) \right\| \leq M,
          \quad
          \forall \, i \in \left\{1, \cdots, N^{u}\right\}. \hspace{30pt}
          \label{eqn:jacobi-norm}
        \end{equation}
      \end{enumerate}
    If the regular learning rate $\alpha_{t}$ and meta learning rate $\beta_{t}$
    satisfy $\alpha_{t}^{2} \beta_{t} < (4 M^{2} L_{0})^{-1}$,
    then each SGD step of \cref{alg:exact} will decrease the validation loss
    $G(\vtheta)$, regardless of the selected unlabeled examples, \ie
    \begin{equation}
        G(\vtheta_{t+1}) \leq G(\vtheta_{t}),
        \quad
        \mbox{for each}~t.
    \end{equation}
    Furthermore, the equality holds if and only if $\nabla \widetilde{\vy} = \bm{0}$
    for the selected unlabeled batch at the $t^{th}$ step.
    \label{thm:convergence}
\end{thm}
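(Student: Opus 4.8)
The plan is to reduce one iteration of \cref{alg:exact} to an explicit parameter update and then apply the descent lemma implied by assumption (i). The pivotal observation is the initialization $\widetilde{\vy}_i = f(\vx_i^u;\vtheta_t)$: with the MSE consistency loss $\Phi^{\text{MSE}}(\vp,\vy)=\|\vp-\vy\|_2^2$, the inner gradient $\nabla\vtheta_t = \frac{2}{B^u}\sum_i J_{\vtheta}f(\vx_i^u;\vtheta_t)^{\top}\bigl(f(\vx_i^u;\vtheta_t)-\widetilde{\vy}_i\bigr)$ vanishes, so $\widetilde{\vtheta}_{t+1}=\vtheta_t$. This collapses the second-order chain rule in \cref{eqn:label} into an evaluation at $\vtheta_t$. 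Substituting into \cref{eqn:label-grad} and using $B^l=N^l$, so that $\frac{1}{B^l}\sum_k\nabla_{\vtheta}\mathcal{L}(\vx_k^l,\vy_k;\vtheta_t)=\nabla_{\vtheta}G(\vtheta_t)$, I would obtain the closed form
\[
  \nabla\widetilde{\vy}_i = \frac{2\alpha_t}{B^u}\,J_{\vtheta}f(\vx_i^u;\vtheta_t)\,\nabla_{\vtheta}G(\vtheta_t).
\]

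Next I would propagate this through $\widehat{\vy}_i=\widetilde{\vy}_i-\beta_t\nabla\widetilde{\vy}_i$ and the final gradient $\nabla\widehat{\vtheta}_t$. Writing $g:=\nabla_{\vtheta}G(\vtheta_t)$ and $J_i:=J_{\vtheta}f(\vx_i^u;\vtheta_t)$, and using $f(\vx_i^u;\vtheta_t)-\widehat{\vy}_i=\beta_t\nabla\widetilde{\vy}_i$, the MSE gradient gives
\[
  \nabla\widehat{\vtheta}_t = \frac{4\alpha_t\beta_t}{B^u}\,A_t\,g, \qquad A_t := \frac{1}{B^u}\sum_{i=1}^{B^u} J_i^{\top} J_i \succeq 0 .
\]
The crucial structural point is that the effective descent direction is the \emph{true} validation gradient $g$ premultiplied by a positive semidefinite matrix $A_t$; this is what guarantees alignment with $-g$. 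The SGD step therefore reads $\vtheta_{t+1}-\vtheta_t = -\frac{4\alpha_t^2\beta_t}{B^u}A_t g$.

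Finally, I would invoke the descent lemma from assumption (i),
\[
  G(\vtheta_{t+1}) \le G(\vtheta_t) + g^{\top}(\vtheta_{t+1}-\vtheta_t) + \tfrac{L_0}{2}\|\vtheta_{t+1}-\vtheta_t\|_2^2 ,
\]
bounding the first-order term by $-\frac{4\alpha_t^2\beta_t}{B^u}\,g^{\top}A_t g\le 0$ (since $A_t\succeq 0$) and the quadratic term via $g^{\top}A_t^2 g\le\|A_t\|\,g^{\top}A_t g\le M^2\,g^{\top}A_t g$, where $\|A_t\|\le M^2$ follows from assumption (ii). Collecting terms yields
\[
  G(\vtheta_{t+1}) - G(\vtheta_t) \le -\frac{4\alpha_t^2\beta_t}{B^u}\,g^{\top}A_t g\left(1 - \frac{2\alpha_t^2\beta_t L_0 M^2}{B^u}\right),
\]
so the stated condition $\alpha_t^2\beta_t<(4M^2L_0)^{-1}$ is (more than) sufficient to force the bracket positive and the whole right-hand side $\le 0$. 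For the equality claim, the forward direction is immediate: $\nabla\widetilde{\vy}=\bm{0}$ gives $\nabla\widehat{\vtheta}_t=\bm{0}$, hence $\vtheta_{t+1}=\vtheta_t$. For the converse, strict positivity of the bracket means equality can hold only if $g^{\top}A_t g=\frac{1}{B^u}\sum_i\|J_i g\|_2^2=0$, forcing $J_i g=\bm{0}$ and thus $\nabla\widetilde{\vy}_i=\bm{0}$ for every selected $i$.

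The main obstacle I expect lies in the first two steps: correctly differentiating through the unrolled SGD step, exploiting $\widetilde{\vtheta}_{t+1}=\vtheta_t$ to eliminate the Hessian, and verifying the positive semidefinite structure of $A_t$. Once the update is exhibited as a PSD map applied to $g$, the descent argument is standard smoothness bookkeeping; tracking the exact constants through $B^u$ and the two learning rates to land on the stated threshold is routine but error-prone.
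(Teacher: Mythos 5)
Your proposal is correct, and the constants check out: the closed form $\nabla\widetilde{\vy}_i=\tfrac{2\alpha_t}{B^u}J_i\,g$ at the initialization $\widetilde{\vy}_i=f(\vx_i^u;\vtheta_t)$ matches what the paper derives, and your final bracket $1-\tfrac{2\alpha_t^2\beta_t L_0M^2}{B^u}$ is indeed forced positive by the stated threshold. However, your route is genuinely different from the paper's. The paper works in pseudo-label space: it first proves a separate lemma that the composite map $H(\widetilde{\vy})=G\bigl(\widetilde{\vtheta}_{t+1}(\widetilde{\vy})\bigr)$ has a Lipschitz-continuous gradient with constant $L_t\le 4\alpha_t^2M^2L_0$ (valid for \emph{arbitrary} $\widetilde{\vy}$, not just the initialization), and then reads the pseudo-label update $\widehat{\vy}=\widetilde{\vy}-\beta_t\nabla\widetilde{\vy}$ as one plain gradient step on $H$, so that the mean-value/descent estimate gives $H(\widehat{\vy})\le H(\widetilde{\vy})-(\beta_t-\beta_t^2L_t)\|\nabla\widetilde{\vy}\|^2$ and the condition $\beta_t<L_t^{-1}$ becomes exactly $\alpha_t^2\beta_t<(4M^2L_0)^{-1}$. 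You instead stay in parameter space, collapse the whole iteration into the explicit preconditioned step $\vtheta_{t+1}-\vtheta_t=-\tfrac{4\alpha_t^2\beta_t}{B^u}A_tg$ with $A_t=\tfrac{1}{B^u}\sum_iJ_i^{\top}J_i\succeq 0$, and apply the descent lemma for $G$ directly. Your version is shorter and more elementary (no auxiliary Lipschitz lemma, no mean value theorem on $H$), makes the PSD structure of the effective update transparent, and in fact shows the stated learning-rate condition is conservative by a factor of $2B^u$; it also cleanly delivers the equality case via $g^{\top}A_tg=\tfrac{1}{B^u}\sum_i\|J_ig\|^2=0\iff\nabla\widetilde{\vy}=\bm{0}$. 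What the paper's formulation buys in exchange is reusability: the Lipschitz lemma and the bound $H(\widehat{\vy})\le H(\widetilde{\vy})-(\beta_t-\beta_t^2L_t)\|\nabla\widetilde{\vy}\|^2$ are exactly the ingredients recycled in the proof of the convergence-rate theorem, and the lemma would continue to apply if more than one meta-gradient step were taken on the pseudo labels, whereas your closed form relies on evaluating everything at the special initialization point.
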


\begin{thm}
    Assume the same conditions as in \cref{thm:convergence}, and
    \begin{equation}
        \inf_{t} \left( \beta_{t} - 4 \alpha_{t}^{2}\beta_{t}^{2} M^{2} L_{0} \right) = D_1 > 0,
        \quad
        \inf_{t} \alpha_{t} = D_{2} > 0.
        \label{eqn:lr-condition}
    \end{equation}
    We further assume that the unlabeled dataset contains the labeled dataset,
    \ie $\mathcal{D}^{l} \subseteq \mathcal{D}^{u}$.
    Then, \cref{alg:exact} achieves $\E \left[ \| \nabla_{\vtheta}G(\vtheta_{t}) \|^{2} \right] \leq \epsilon$
    in $O(1/\epsilon^2)$ steps, \ie
    \begin{equation}
        \min_{1 \leq t \leq T} \E \left[ \| \nabla_{\vtheta}G(\vtheta_{t}) \|^{2} \right] \leq \frac{C}{\sqrt{T}},
    \end{equation}
    where $C$ is a constant independent of the training process.
    \label{thm:convergence-rate}
\end{thm}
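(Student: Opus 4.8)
The plan is to upgrade the qualitative descent of \cref{thm:convergence} into a quantitative one, telescope it over the iterations, and then translate the resulting control on the meta-gradient norm into control on $\|\nabla_\vtheta G\|$. Write $S_t := \sum_{i=1}^{B^u}\|\nabla\widetilde{\vy}_i\|^2$ for the squared meta-gradient of the batch sampled at step $t$. First I would revisit the proof of \cref{thm:convergence} and extract the explicit per-step inequality $G(\vtheta_t)-G(\vtheta_{t+1}) \ge (\beta_t - 4\alpha_t^2\beta_t^2 M^2 L_0)\,S_t$, of which the stated $G(\vtheta_{t+1})\le G(\vtheta_t)$ and the equality condition $\nabla\widetilde{\vy}=\vzero$ are immediate corollaries. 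Taking expectations over the random unlabeled mini-batches, using the uniform lower bound $\beta_t-4\alpha_t^2\beta_t^2M^2L_0\ge D_1$, telescoping from $1$ to $T$, and invoking $G\ge 0$ (which holds because $\Phi\ge 0$), I obtain $D_1\sum_{t=1}^{T}\E[S_t]\le G(\vtheta_1)$, a bound independent of $T$.

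The crux is then a lower bound of the form $\E[S_t\mid\vtheta_t]\ge c\,\|\nabla_\vtheta G(\vtheta_t)\|^4$ for a constant $c>0$. Here I would use the closed form from \cref{eqn:label-grad}: with the MSE consistency loss and the zero-initialised pseudo labels, a short computation gives $\nabla\widetilde{\vy}_i = \tfrac{2\alpha_t}{B^u}J_\vtheta f(\vx_i^u;\vtheta_t)\,\nabla_\vtheta G(\vtheta_t)$, so that $S_t=\tfrac{4\alpha_t^2}{(B^u)^2}\,\nabla_\vtheta G^\top A_t\,\nabla_\vtheta G$ with $A_t=\sum_{i}J_\vtheta f(\vx_i^u;\vtheta_t)^\top J_\vtheta f(\vx_i^u;\vtheta_t)$ the (PSD) Gram matrix of the sampled Jacobians. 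Averaging over the sampling replaces $A_t$ by $\tfrac{B^u}{N^u}\sum_{i=1}^{N^u}J_\vtheta f(\vx_i^u;\vtheta_t)^\top J_\vtheta f(\vx_i^u;\vtheta_t)$, and here the assumption $\mathcal{D}^l\subseteq\mathcal{D}^u$ enters decisively: the labeled Jacobians appear among the summands, so the averaged Gram matrix dominates $\tfrac{B^u}{N^u}\sum_{k=1}^{N^l}J_\vtheta f(\vx_k^l;\vtheta_t)^\top J_\vtheta f(\vx_k^l;\vtheta_t)$. Writing $\nabla_\vtheta G=\tfrac{1}{N^l}\sum_k J_\vtheta f(\vx_k^l;\vtheta_t)^\top\nabla_\vp\Phi$ and applying Cauchy--Schwarz over $k$ converts $\nabla_\vtheta G^\top A_t\nabla_\vtheta G$ into a quantity bounded below by $\|\nabla_\vtheta G\|^4$ divided by $\sum_k\|\nabla_\vp\Phi\|^2$; boundedness of $\nabla_\vp\Phi$ (a mild condition on the discrepancy, automatic for the losses used here) makes the denominator a constant, while $\inf_t\alpha_t=D_2>0$ keeps the $\alpha_t^2$ factor from degenerating. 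This quartic, rather than quadratic, dependence is the essential phenomenon and is the step I expect to be hardest: the naive eigenvalue bound $\nabla_\vtheta G^\top A_t\nabla_\vtheta G\ge\lambda_{\min}(A_t)\|\nabla_\vtheta G\|^2$ fails because $A_t$ is rank-deficient, so the $\mathcal{D}^l\subseteq\mathcal{D}^u$ inclusion together with the Cauchy--Schwarz detour are genuinely needed, and they are precisely what forces the extra hypotheses of \cref{thm:convergence-rate} beyond those of \cref{thm:convergence}.

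Finally I would assemble the two bounds. Combining $\E[S_t\mid\vtheta_t]\ge c\|\nabla_\vtheta G(\vtheta_t)\|^4$ with Jensen's inequality $\E[\|\nabla_\vtheta G_t\|^4]\ge(\E\|\nabla_\vtheta G_t\|^2)^2$ and the telescoped bound yields $cD_1\sum_{t=1}^T(\E\|\nabla_\vtheta G(\vtheta_t)\|^2)^2\le G(\vtheta_1)$. Since $T\big(\min_{1\le t\le T}\E\|\nabla_\vtheta G(\vtheta_t)\|^2\big)^2\le\sum_{t=1}^T(\E\|\nabla_\vtheta G(\vtheta_t)\|^2)^2$, I conclude $\min_{1\le t\le T}\E\|\nabla_\vtheta G(\vtheta_t)\|^2\le\sqrt{G(\vtheta_1)/(cD_1)}\,/\sqrt{T}=C/\sqrt{T}$, with $C$ independent of the training horizon; the $\sqrt{T}$ rate, rather than $T$, is exactly the footprint of the quartic inequality. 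It then remains only to confirm the boundedness of $\nabla_\vp\Phi$ and to reconcile the precise constants in the per-step descent with the form established for \cref{thm:convergence}.
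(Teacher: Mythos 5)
Your proposal is correct and follows essentially the same route as the paper's proof: the quantitative per-step descent $G(\vtheta_t)-G(\vtheta_{t+1})\ge D_1\|\nabla\widetilde{\vy}_t\|^2$ telescoped against $G\ge 0$, the closed-form identity $\nabla\widetilde{\vy}_i=\tfrac{2\alpha_t}{B^u}J_\vtheta f(\vx_i^u;\vtheta_t)\nabla_\vtheta G(\vtheta_t)$, the use of $\mathcal{D}^l\subseteq\mathcal{D}^u$ to lower-bound the expected Jacobian Gram matrix by its labeled part, and the Cauchy--Schwarz argument (with $\|f(\vx_k^l;\vtheta)-\vy_k\|\le R$ playing the role of your bounded $\nabla_{\vp}\Phi$) yielding the quartic bound $\E[\|\nabla\widetilde{\vy}_t\|^2]\ge c\,(\E\|\nabla_\vtheta G(\vtheta_t)\|^2)^2$ and hence the $1/\sqrt{T}$ rate. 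You correctly identified the crux --- that the rank-deficient Gram matrix defeats the naive eigenvalue bound and forces the detour through $\mathcal{D}^l\subseteq\mathcal{D}^u$ --- which is exactly how the paper proceeds.
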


\paragraph{Remarks.}
(i) The assumption in \eqref{eqn:jacobi-norm} is realistic.
Here, we assume the neural network $f$ is continuously differentiable \wrt $\vtheta$.
Due to the existence of norm-based regularization, \ie weight decay,
we can assume $\vtheta$ is optimized within a compact set in the parameter space.
The Jacobian function $J_{\vtheta}f$ is thus bounded within the compact set due to its continuity.
Furthermore, since there are finite training examples, the bound in \eqref{eqn:jacobi-norm} is plausible.
(ii) The conditions in \eqref{eqn:lr-condition} specify that the learning rates
$\alpha_{t}$ and $\beta_{t}$ can neither grow too large nor decay to zero too rapidly.
The step learning rate annealing strategy can satisfy this condition
as long as the initial learning rate is sufficiently small.
(iii) The condition $\mathcal{D}^{l} \subseteq \mathcal{D}^{u}$ can be satisfied
by incorporating the labeled data into the unlabeled set.

\begin{table*}[!t]
    \definecolor{mygray}{RGB}{220,220,220}
    \renewcommand{\arraystretch}{1.0}
    \centering
    %MH: describe whether the results are accuracy or error rates
    %\caption{Semi-supervised classification results on the SVHN, CIFAR-10, and CIFAR-100 datasets.}
    \caption{Semi-supervised classification error rates of the Conv-Large \citep{tarvainen2017mean}
    architecture on the SVHN, CIFAR-10, and CIFAR-100 datasets.
    The numbers of labeled data are 1k, 4k, and 10k for these three datasets, respectively.}
    \vspace{3pt}
    \begin{tabular}{l|ccc}
        \Xhline{1pt}
        Method & \textbf{SVHN} & \textbf{CIFAR-10} & \textbf{CIFAR-100} \\
        \hline
        $\Pi$-Model \citep{Laine2017temporal} & 4.82\% & 12.36\% & 39.19\% \\
        \rowcolor{mygray} TE \citep{Laine2017temporal} &  4.42\% &  12.16\% & 38.65\% \\
        MT \citep{tarvainen2017mean} & 3.95\% & 12.31\% & - \\
        \rowcolor{mygray} MT+SNTG \citep{luo2018smooth} & 3.86\% & 10.93\% & - \\
        VAT \citep{miyato2018virtual} & 5.42\% & 11.36\% & - \\
        \rowcolor{mygray} VAT+Ent \citep{miyato2018virtual} & 3.86\% & 10.55\% & - \\
        VAT+Ent+SNTG \citep{luo2018smooth} & 3.83\% & ~~9.89\% & - \\
        \rowcolor{mygray} VAT+VAdD \citep{park2018adversarial} & 3.55\% & ~~9.22\% & - \\
        MA-DNN \citep{chen2018semi} & 4.21\% & 11.91\% & 34.51\% \\
        % SaaS \citep{cicek2018saas} & 4.77\% & 13.22\% & - \\
        \rowcolor{mygray} Co-training \citep{qiao2018deep} & 3.29\% & ~~8.35\% &  34.63\% \\
        MT+fastSWA \citep{athiwaratkun2018there} & - & ~~9.05\% & 33.62\% \\
        \rowcolor{mygray} TNAR-VAE \citep{yu2019tangent} & 3.74\% & ~~8.85\% & - \\
        ADA-Net \citep{wang2019semi} & 4.62\% & 10.30\% & - \\
        \rowcolor{mygray} ADA-Net+fastSWA \citep{wang2019semi} & - &  ~~8.72\% & - \\
        DualStudent \citep{ke2019dual} & - & ~~8.89\% & 32.77\% \\
        \rowcolor{mygray} Ours &  \textbf{3.15\%} &  \textbf{~~7.78\%} & \textbf{30.74\%} \\
        \cdashline{1-4}[2pt/3pt]
        Fully-Supervised & 2.67\% & ~~4.88\% & 22.10\% \\
        \Xhline{1pt}
    \end{tabular}
    \label{tab:main-results}
\end{table*}

\begin{table*}[!t]
\begin{minipage}{0.58\textwidth}
\centering
\caption{
    Semi-supervised classification error rates of the 26-layer ResNet \citep{he2016deep}
    architecture with the shake-shake regularization \citep{Gastaldi17ShakeShake} on the CIFAR-10 and CIFAR-100 datasets.
    % \xth{Is it possible to include more methods for comparison? If not, explain the reason in the text.}
    %Xinyu: because fastSWA is the sota method with the ResNet backbone.
}
\label{tab:shake-shake}
\vspace{3pt}
\scalebox{0.9}{
\begin{tabular}{l|ccc|ccc}
    \Xhline{1pt}
    Dataset & \multicolumn{3}{c|}{\textbf{CIFAR-10}} & \multicolumn{3}{c}{\textbf{CIFAR-100}} \\ \cline{1-4}\cline{5-7}
    \#Images & 50k & 50k & 50k & 50k & 50k & 50k \\
    \#Labels & 1k & 2k & 4k & 6k & 8k & 10k \\\hline
    fastSWA & 6.6\% & 5.7\% & 5.0\% & - & - & 28.0\% \\
    Ours & \textbf{6.3\%} & \textbf{5.2\%} & \textbf{4.1\%} & \textbf{26.7\%} & \textbf{25.1\%} & \textbf{22.9\%} \\
    \Xhline{1pt}
\end{tabular}
}
\end{minipage}
\hfill
\begin{minipage}{0.38\textwidth}
\centering
\caption{
    Semi-supervised classification error rates on the ImageNet \citep{ILSVRC15} dataset.
    10\% training images are used as the labeled data.
}
\vspace{3pt}
\begin{tabular}{l|cc}
    \Xhline{1pt}
    Method & \textbf{Top-1} & \textbf{Top-5} \\ \hline
    Labeled-Only & 53.65\% & 31.01\% \\
    MT  & 49.07\% & 23.59\% \\
    Co-training  & 46.50\% & 22.73\% \\
    ADA-Net & 44.91\% & 21.18\% \\
    Ours & \textbf{44.87\%} & \textbf{18.88\%} \\\cdashline{1-3}[2pt/3pt]
    Fully-Supervised & 29.15\% & 10.12\% \\
    \Xhline{1pt}
\end{tabular}
\label{tab:imagenet}
\end{minipage}
\vspace{-8pt}
\end{table*}

\section{Experiments}

We evaluate the proposed algorithm on the SVHN \citep{netzer2011reading},
CIFAR \citep{krizhevsky2009learning}, and ImageNet \citep{ILSVRC15} datasets.
The 13-layer Conv-Large \citep{tarvainen2017mean} and
26-layer ResNet \citep{he2016deep} with the shake-shake regularization \citep{Gastaldi17ShakeShake}
are used as the backbone models.
%
%For more information of the datasets and the training details, please refer to the supplementary materials.
%
More implementation details can be found in the supplementary material.
The training sources are available at \url{https://github.com/Sakura03/SemiMeta}.
%MH: add this line
% The source code and trained models will be made available to the public. 

\subsection{Results on the SVHN and CIFAR}
%MH: describe whether the results are accuracy or error rates
%In \cref{tab:main-results}, we report the semi-supervised classification results of the proposed algorithm on the SVHN, CIFAR-10, and CIFAR-100 datasets with evaluations against the state-of-the-art methods.
In \cref{tab:main-results}, we report the semi-supervised classification error rates of the proposed algorithm and state-of-the-art methods on the SVHN, CIFAR-10, and CIFAR-100 datasets. 
The proposed meta-learning algorithm performs
favorably against the previous approaches on all three datasets.
%
% In addition, we explore different backbone architectures and
% compare with the \sota method \citep{athiwaratkun2018there}
% on the 26-layer ResNet \citep{he2016deep} model with the shake-shake regularization \citep{Gastaldi17ShakeShake}.
In addition, we explore the effectiveness of the proposed algorithm
on different backbone architectures and evaluate on the 26-layer ResNet \citep{he2016deep}
with the shake-shake regularization \citep{Gastaldi17ShakeShake}.
Since only a few previous papers include experiments on this backbone,
we just compare the performance with the ``fastSWA'' method \citep{athiwaratkun2018there}
which gives quite complete results and achieves the \sota accuracy.
\cref{tab:shake-shake} shows that the proposed algorithm performs favorably
under all different experimental configurations, even with fewer labeled examples,
indicating the efficacy of the consistency loss guided by the meta-gradients.
% the label-guided consistency loss.

% Recent work such as MixMatch~\citep{berthelot2019mixmatch} and ReMixMatch~\citep{berthelot2019remixmatch} uses much data augmentation to promote the performance, which is orthogonal to our work. Therefore, it is unfair to directly compare their results with ours. For fair comparison, we implement the proposed algorithm on top of the MixMatch baseline. The error rates on CIFAR-10 are $6.24\%$ for MixMatch (see Table 5 of MixMatch~\citep{berthelot2019mixmatch}) and $6.02\%$ for MixMatch+Meta with WRN-28 backbone and 4k labels.

\begin{figure*}[!t]
    \centering
    \includegraphics[width=\textwidth]{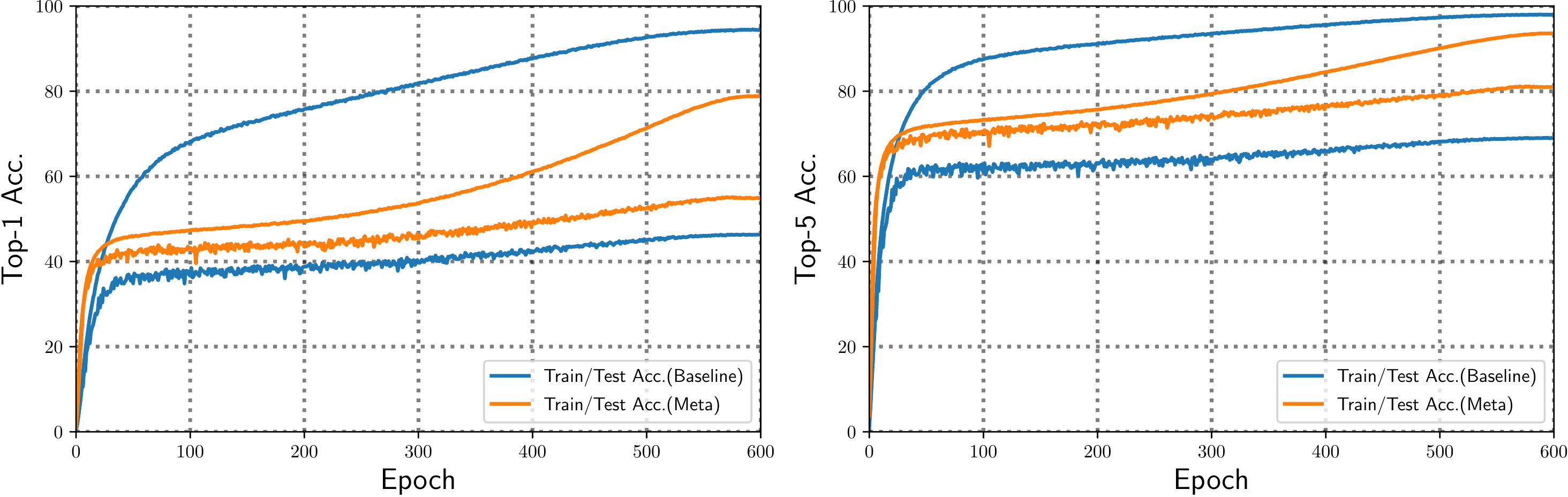}
    \caption{Accuracy curves of the baseline method and the proposed algorithm.
    }
    \label{fig:dynamics}
\end{figure*}

\begin{table*}[!t]
\begin{minipage}{0.47\textwidth}
% \begin{table*}[!t]
    \centering
    \caption{Ablation study of the meta-learning component and the mixup augmentation.
    The same number of labeled data is used as in \cref{tab:main-results}.}
    \vspace{3pt}
    \scalebox{0.77}{
    \begin{tabular}{c|cc|ccc}
        \Xhline{1pt}
        No. & Meta & Mix-Up & \textbf{SVHN} & \textbf{CIFAR-10} & \textbf{CIFAR-100} \\\hline
        1 & & & 9.76\% & 15.43\% & 38.74\% \\
        2 & \checkmark & & 3.68\% & 11.63\% & 35.40\% \\
        3 & & \checkmark & 5.60\% & 11.10\% & 32.67\% \\
        4 & \checkmark & \checkmark & \textbf{3.15\%} & \textbf{~~7.78\%} & \textbf{30.74\%} \\
        \Xhline{1pt}
    \end{tabular}
    }
    \label{tab:components-ablation}
% \end{table*}
\end{minipage}
\hfill
\begin{minipage}{0.5\textwidth}
\centering
\caption{
    Comparison with ADA-Net~\citep{wang2019semi} on both with and without mix-up setting. The number of provided labels are 4k and 1k in CIFAR-10 and SVHN, respectively.
}
\label{tab:compare-adanet}
\vspace{3pt}
\scalebox{0.83}{
\begin{tabular}{c|cc|cc}
    \Xhline{1pt}
    \multirow{2}[0]{*}{Dataset} & \multicolumn{2}{c}{\textbf{CIFAR-10}} & \multicolumn{2}{c}{\textbf{SVHN}} \\
    \cline{2-5}
     & w/o mixup & w mixup & w/o mixup & w/ mixup \\
    \hline
    ADA-Net & 18.67\% & 8.87\% & 10.76\% & 5.90\% \\
    \hline
    Ours & {\bf 11.63\%} & {\bf 7.78\%}  &  {\bf 3.86\%} & {\bf 3.15\%} \\
    \Xhline{1pt}
\end{tabular}
}
\end{minipage}
\end{table*}

\subsection{Results on the ImageNet}

The evaluation results with the ResNet-18 \citep{he2016deep} backbone
on the ImageNet dataset \citep{ILSVRC15} are summarized in \cref{tab:imagenet}.
The proposed algorithm performs well against the ADA-Net \citep{wang2019semi} in terms of top-5 accuracy.
%\xth{It is better to add more insight on why the proposed method could gain better performance than others.}
%Xinyu: because we take the labeled information into account? This is still repeating...
%
In addition, we demonstrate the accuracy curves of the baseline setting and our approach in \cref{fig:dynamics}, where the baseline setting means only 10\% of the total training examples are used as the labeled data used during training.
% \xth{I am confused about the baseline. What is the setting of baseline? Only use labeled data? It is better to give a clear introduction to the experimental setting.}
%Xinyu: explained
%
\cref{fig:dynamics} shows that merely involving 10\% training examples will lead to severe overfitting, as the training accuracy is very high while testing accuracy is pretty low. The problem is alleviated in our approach thanks to the explicit learning-to-generalize training scheme. Though the training accuracy is not higher than the baseline model, the proposed algorithm achieves better testing accuracy.
% \xth{Is involving 10\% training examples the setting of the baseline? If so, what is the meaning of labeled-only baseline? How does our approach alleviate the problem? Extend the sentence for details.}
%Xinyu: labeled-only means only 10\% of total data are used as labeled data, what's wrong with this?
%Xinyu: because our algorithm is explicitly learning to generalize? (again repeating?)
% \xth{写的比较表面，能不能更加深入一些？}
%
These results suggest that the consistency loss can effectively regularize the training and benefits 
to the generalization ability of the learned model.

\subsection{Ablation Studies}

\paragraph{Effectiveness of Components.}
We analyze the contributions of the meta-learning and
mixup augmentation components of the proposed algorithm. 
The experimental settings are the same as those in \cref{tab:main-results}.
\cref{tab:components-ablation} shows both components can significantly
improve the classification accuracy in the semi-supervised settings.
The mixup augmentation is a simple trick to solve the sample bias issue in SSL. Also, we adapt the mixup formulation (\ie mixing the actual labels and updated pseudo labels in Eq.~\eqref{eqn:mix-up}) to make it compatible with the meta learning framework. Such adaptation is non-trivial as indicated by the performance improvement on top of the mixup-only setting (see the last two rows of Table~\ref{tab:components-ablation}). 
Moreover, the meta-learning component can further improve performance with the presence of the mix-up augmentation,
indicating that meta-learning is orthogonal to the existing data augmentation techniques as a research direction.
To further verify the effectiveness of the proposed meta-learning component, we compare our method with ADA-Net on both with and without mixup setting as shown in Table~\ref{tab:compare-adanet}. We can conclude from the table that the proposed meta learning still outperforms than ADA-Net without using mixup augmentation.

\begin{figure*}[!tb]
\centering
    \begin{minipage}{0.37\textwidth}
        \centering
        \includegraphics[width=\linewidth]{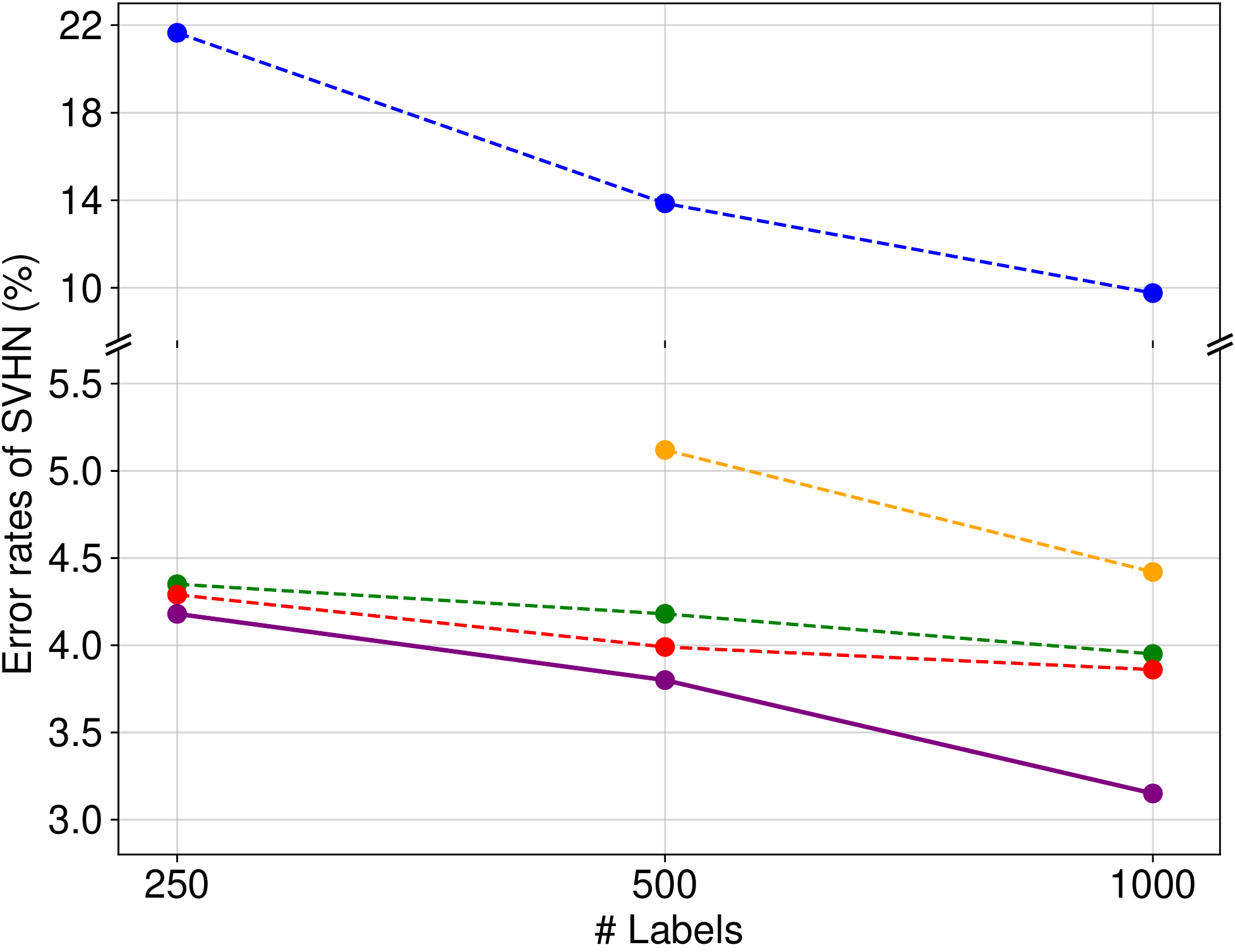}
    \end{minipage}%
    \hskip 5ex
    \begin{minipage}{0.37\textwidth}
        \centering
        \includegraphics[width=\linewidth]{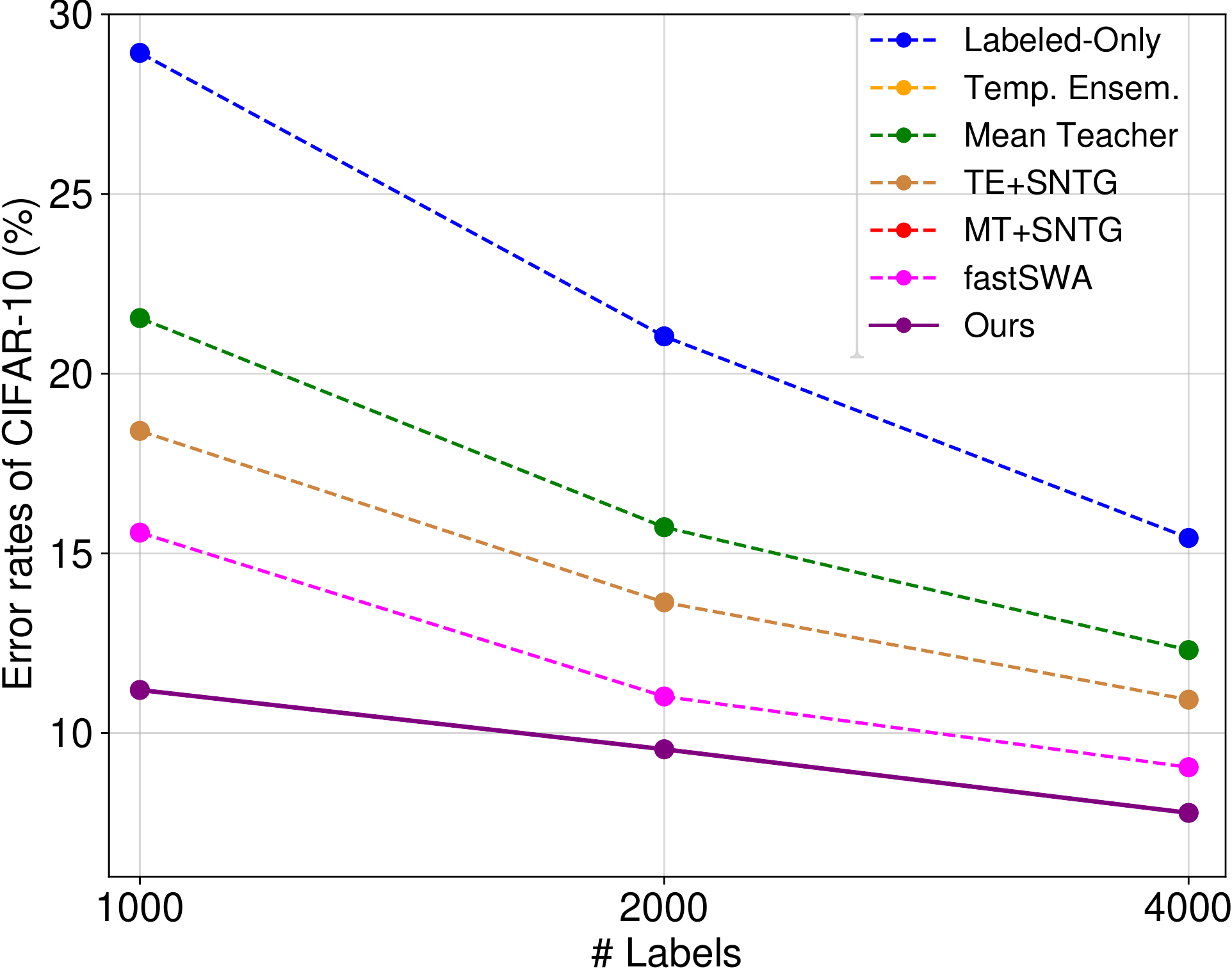}
    \end{minipage}%
\caption{Error rate \vs number of labeled examples.
}
\label{fig:empirical}
\end{figure*}

\begin{figure*}[!t]
    \centering
    \vspace{12pt}
    \subfigure{%
        \begin{minipage}{\textwidth}
            \centering
            \begin{overpic}[width=0.73\linewidth]{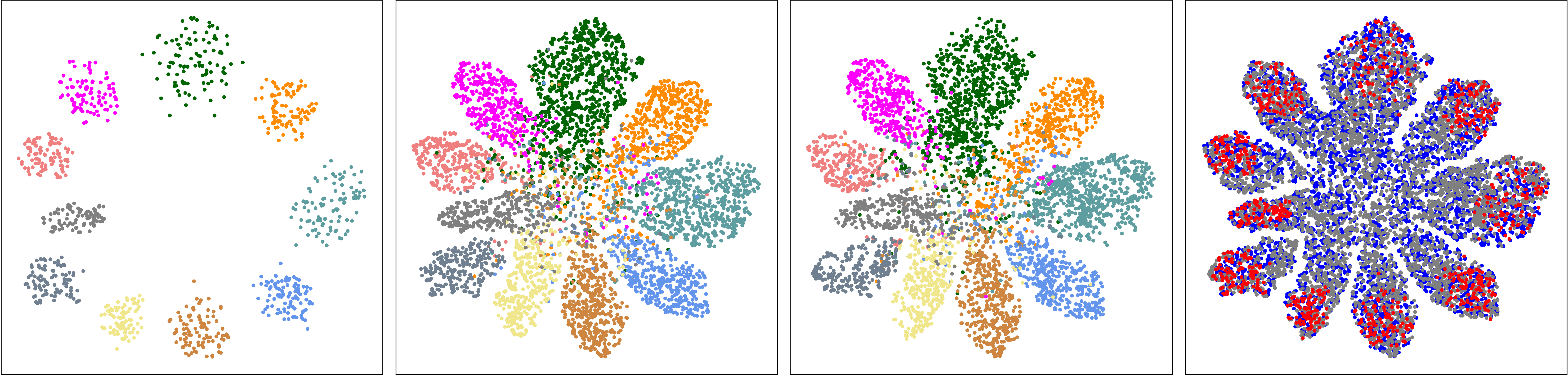}
                \put(-11,11){Baseline}
                \put(-7,-13){Ours}
                \put(8, 24.5){Labeled}
                \put(31.5, 24.5){Unlabeled}
                \put(60.3, 24.5){Test}
                \put(85.5, 24.5){All}
                \put(101, 14){\textcolor{red}{labeled}}
                \put(101, 11){\textcolor{blue}{unlabeled}}
                \put(101, 8){\textcolor{gray}{test}}
            \end{overpic}
        \end{minipage}%
    }
    % \vspace{-8pt}

    \subfigure{%
        \begin{minipage}{\textwidth}
            \centering
            \includegraphics[width=0.73\textwidth]{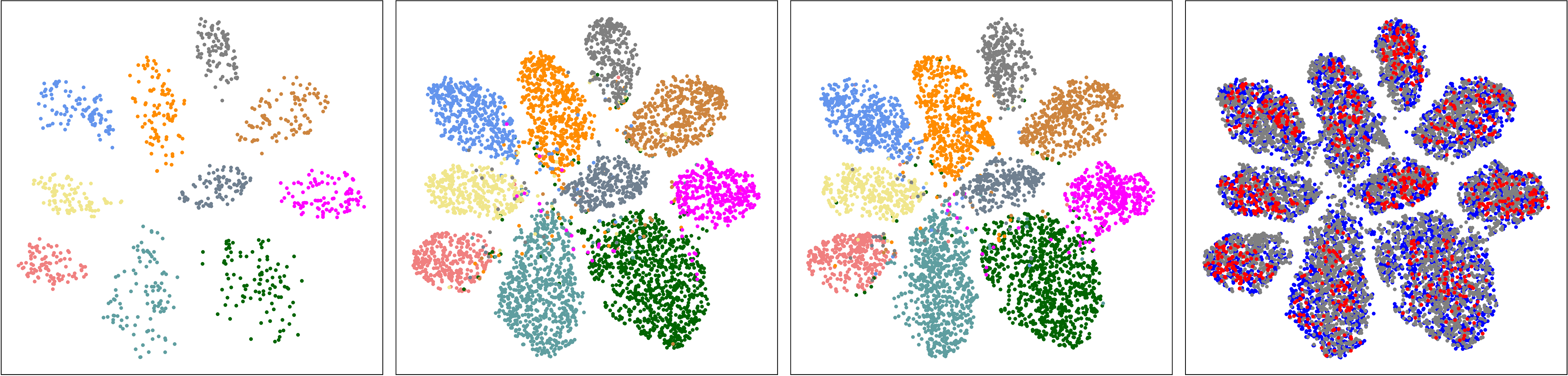}
        \end{minipage}%
    }
    \caption{
        Visualization of the SVHN features by the labeled-only baseline
        and our method.
        We extract features of the labeled (first column), unlabeled (second column),
        and test samples (third column) and project them to the two-dimensional space
        using t-SNE.
        In the first three columns, different categories are represented in different colors.
        In the fourth column, we plot the projected points all together to demonstrate
        the empirical distribution discrepancy.
        The labeled, unlabeled, and test examples are represented in
        \textcolor{red}{red}, \textcolor{blue}{blue}, and \textcolor{gray}{gray},
        respectively.
    }
    \vspace{-4pt}
    \label{fig:visualization}
\end{figure*}

\paragraph{Impact of \#Labels.}
As shown in~\cref{fig:empirical}, we evaluate the robustness of our method against the variation of the number of labels on the SVHN and CIFAR-10 datasets using the same experimental settings, except the number of labels as in~\cref{tab:main-results}.
%
% When the labels are relatively scarce, the accuracy of the labeled-only baseline degrades significantly.
We can find that the accuracy of the labeled-only baseline degrades heavily when reducing the number of labels.
In general, the scarcity of labeled data may lead to a worse generalization ability and more severe overfitting.
%
%MH: Be precise. This statement is not correct.
%In contrast, the meta-learning algorithm is robust against the scarcity of labeled examples.
However, the proposed SSL method can retain a relatively high performance under each setting thanks to our learning-to-generalize regularization.
% performs consistently better than other consistency-based approaches.
%
Therefore, the proposed method can effectively improve the generalization ability even with fewer labeled examples.

\paragraph{Feature Visualization.}

To further analyse the efficacy of our method, we visualize the SVHN features by projecting 128-dimensional features onto a two-dimensional space using the t-SNE \citep{vanDerMaaten2008} technique.
% \xth{Could give a citation to t-SNE paper.}
%Xinyu: added
%
For comparison, we also present the feature visualization of the labeled-only baseline method. 
As displayed in~\cref{fig:visualization}, there is a considerable empirical distribution discrepancy
between labeled and unlabeled examples of the baseline method.
In contrast, such discrepancy is reduced and the feature distributions of
different domains are aligned to some extent by our approach,
indicating the generalization ability is improved.
% \xth{We can add more details here. We can refer to the equations before and point out what connects the labeled and unlabeled data.}
%Xinyu: seems a lot of equations are involved...
%
Furthermore, considering the unlabeled and test samples,
we observe the margin among features of different categories is obviously clearer under our approach, which results in more discriminative features and better classification accuracy.

% \vspace{-6pt}
\section{Conclusion}
% \vspace{-4pt}

In this work, we address the semi-supervised learning problem in a meta-learning fashion. Specifically, we propose a new regularization term that takes the  model generalization ability into consideration in semi-supervised learning, which is different from most existing regularization terms that are based on the empirical assumption of the inner structure of the data manifold. Theoretically, we prove that the proposed algorithm is guaranteed to converge to a critical point at a convergence rate of $O(1/\epsilon^2)$. Extensive experimental results demonstrate that the proposed algorithm performs favorably against the state-of-the-art semi-supervised learning methods on the SVHN, CIFAR, and ImageNet datasets with better generalization ability.

\clearpage

% \bibliographystyle{abbrvnat}
% \setcitestyle{authoryear,open={((},close={))}}
\bibliography{semisup.bib}

\clearpage
\onecolumn
\appendix

\vspace{-15pt}
\section{Table of Notations}
\vspace{-3pt}

The notations in this work are summarized in~\cref{tab:notations}.

\begin{table}[H]
  \definecolor{mygray}{RGB}{220,220,220}
  \renewcommand{\arraystretch}{1}
  \centering
  \caption{Table of notations in this work.}
%   \vspace{-2pt}
  \resizebox{0.95\textwidth}{!}{
  \begin{tabular}{cl}
      \Xhline{1pt}
      \textbf{Symbol} & \multicolumn{1}{c}{\textbf{Description}} \\
      \hline\hline
      \multicolumn{2}{c}{\textbf{Data}}\\
      $\vx_{k}^{l}$ & The $k^{th}$ labeled training example \\
      \rowcolor{mygray} $\vx_{i}^{u}$ & The $i^{th}$ unlabeled training example \\
      \hline\hline
      \multicolumn{2}{c}{\textbf{Labels}}\\
      \rowcolor{mygray} $\vy_{k}$ & Actual label of $\vx_{k}^{l}$ \\
      $\widetilde{\vy}_{i}$ & Initialized proximal label of $\vx_{i}^{u}$ \\
      \rowcolor{mygray} $\widehat{\vy}_{i}$ & Updated proximal label of $\vx_{i}^{u}$ \\
      $\widetilde{\evy}_{i,j}$ & The $j^{th}$ entry of $\widetilde{\vy}_{i}$ (Same for $\widehat{\evy}_{i,j}$) \\
      \rowcolor{mygray} $\widetilde{\vy}$ & Proximal labels of the unlabeled mini-batch
      $\widetilde{\vy} = \left\{ \widetilde{\vy}_{i} : i = 1, \cdots, B^{u} \right\}$ (Seen as a vector) \\
      $\widetilde{\vy}_{t}$ & Proximal labels of the unlabeled mini-batch at the $t^{th}$ step\footnotemark \\
      \rowcolor{mygray} $\widetilde{\vy}_{i,t}$ & The $i^{th}$ proximal label of $\widetilde{\vy}_{t}$ \\
      \hline\hline
      \multicolumn{2}{c}{\textbf{Functions}}\\
      \rowcolor{mygray} $f(\cdot; \vtheta)$ & Convolutional Neural Network parameterized by $\vtheta$ \\
      $\Phi(\cdot, \cdot)$ & Non-negative function that measures discrepancy of distributions \\
      \rowcolor{mygray} $\mathcal{L}(\cdot, \cdot; \vtheta)$ & Loss function of the data-label pair when the model parameter is $\vtheta$ \\
      $G(\vtheta; \mathcal{D})$ & Validation loss on the dataset $\mathcal{D}$ when the model parameter is $\vtheta$ \\
      \rowcolor{mygray} $\nabla_{\vx}g, J_{\vx}g$ & The gradient or Jacobian function of a generic function $g$ \wrt $\vx$ \\
      \hline\hline
      \multicolumn{2}{c}{\textbf{Parameters}}\\
      \rowcolor{mygray} $\vtheta_{t}$ & Model parameters at the $t^{th}$ step \\
      $\widetilde{\vtheta}_{t}$ & Model parameters after the pseudo-update at the $t^{th}$ step (\ie Eq.~(4) in the main text) \\
      \rowcolor{mygray} $\theta_{t, l}$ & The $l^{th}$ entry of $\vtheta_{t}$ (Same for $\widetilde{\theta}_{t,l}$) \\
      \hline\hline
      \multicolumn{2}{c}{\textbf{Gradients}}\\
      \rowcolor{mygray} $\nabla \vtheta_{t}$ & Gradients of the pseudo-update at the $t^{th}$ step \\
      $\nabla \widehat{\vtheta}_{t}$ & Gradients of the actual update at the $t^{th}$ step \\
      \rowcolor{mygray} $\nabla \vtheta_{t}^{l}$ & Gradients of labeled mini-batch at the $t^{th}$ step \\
      $\nabla \widetilde{\vy}_{i}$ & Gradients of the proximal label $\widetilde{\vy}_{i}$ \\
      \rowcolor{mygray} $\nabla \widetilde{\vy}$ & Gradients of proximal labels
      $\nabla \widetilde{\vy} = \left\{ \nabla \widetilde{\vy}_{i} : i = 1, \cdots, B^{u} \right\}$ (Seen as a vector) \\
      \hline\hline
      \multicolumn{2}{c}{\textbf{Configurations}}\\
      \rowcolor{mygray} $\alpha_{t},~\beta_{t}$ & Regular learning rate and meta learning rate at the $t^{th}$ step \\
      $N^{l},~N^{u}$ & Numbers of labeled examples and unlabeled examples \\
      \rowcolor{mygray} $B^{l},~B^{u}$ & batch sizes for labeled examples and unlabeled examples \\
      \Xhline{1pt}
  \end{tabular}
  }
  \label{tab:notations}
\end{table}
\footnotetext{
  For a bit abuse of notations, the subscript $t$ or $\tau$ of $\widetilde{\vy}$
  specify the current step number,
  while subscript $(i,j)$ of indicates the $j^{th}$ entry of the $i^{th}$ proximal label.
  The step subscript is ommited when there is no ambiguity.
  }
\clearpage

\section{Convergence Analysis of Semi-Supervised Learning with Meta-Gradient}

\subsection{Lemma of Lipschitz Continuity}

\begin{lemma}
    Let
    \begin{equation}
      G(\vtheta;\mathcal{D}^{l}) = \frac{1}{N^{l}} \sum_{k=1}^{N^{l}} \mathcal{L}(\vx_{k}^{l}, \vy_{k}; \vtheta_t)
    \end{equation}
    be the loss function of the labeled examples.
    Assume 
    \begin{enumerate}[label=(\roman*),itemindent=0pt]
        \item the gradient function $\nabla_{\vtheta}G$ is Lipschitz-continuous with a Lipschitz constant $L_0$;
        and
        \item the norm of the Jacobian matrix of $f$ \wrt $\vtheta$ is upper-bounded by a constant $M$, \ie
        \begin{equation}
          \left\| J_{\vtheta} f(\vx_{i}^{u}; \vtheta) \right\| \leq M,
          \quad
          \forall \, i \in \left\{1, \cdots, N^{u}\right\}. 
        \end{equation}
      \end{enumerate}
    If the labeled data loss is considered as a function of the pseudo-targets
    $\widetilde{\vy} = \{\widetilde{\vy}_{i}:i=1,\cdots,B^{u}\}$, \ie
    $H(\widetilde{\vy}) = G(\widetilde{\vtheta}_{t+1}(\widetilde{\vy}))$,
    then the gradient function $\nabla_{\widetilde{\vy}} H$ is also Lipschitz-continuous and its Lipschitz constant is
    upper-bounded by $4 \alpha_{t}^{2} M^{2} L_{0}$.
\label{lemma:lipschitz}
\end{lemma}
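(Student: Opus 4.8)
The plan is to exploit a special feature of the squared-error consistency loss fixed in this section: the pseudo-update map $\widetilde{\vy} \mapsto \widetilde{\vtheta}_{t+1}(\widetilde{\vy})$ is \emph{affine}, so that differentiating it is trivial and no second-order information about $f$ is ever needed. Since $\Phi^{\text{MSE}}(\vp,\vy) = \|\vp - \vy\|_2^2$ gives $\mathcal{L}(\vx_i^u, \widetilde{\vy}_i; \vtheta_t) = \|f(\vx_i^u;\vtheta_t) - \widetilde{\vy}_i\|_2^2$, I would first write out the pseudo-gradient
\begin{equation}
  \nabla \vtheta_t = \frac{2}{B^{u}}\sum_{i=1}^{B^{u}} J_{\vtheta} f(\vx_{i}^{u};\vtheta_t)^{\top}\left(f(\vx_{i}^{u};\vtheta_t) - \widetilde{\vy}_{i}\right),
\end{equation}
and observe that, because $\vtheta_t$ and the $\vx_i^u$ are frozen at step $t$, the only dependence on $\widetilde{\vy}$ is through the linear term $-\widetilde{\vy}_i$. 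Substituting into $\widetilde{\vtheta}_{t+1} = \vtheta_t - \alpha_t \nabla \vtheta_t$ then shows $\widetilde{\vtheta}_{t+1}$ is affine in $\widetilde{\vy}$ with a \emph{constant} Jacobian
\begin{equation}
  A := J_{\widetilde{\vy}}\,\widetilde{\vtheta}_{t+1}, \qquad J_{\widetilde{\vy}_{i}}\widetilde{\vtheta}_{t+1} = \frac{2\alpha_{t}}{B^{u}}\, J_{\vtheta} f(\vx_{i}^{u};\vtheta_t)^{\top},
\end{equation}
which does not vary with $\widetilde{\vy}$.

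Next I would bound the operator norm of $A$ using assumption (ii). Treating $\widetilde{\vy}$ as the stacked vector $(\widetilde{\vy}_1, \ldots, \widetilde{\vy}_{B^u})$, for any direction $\vv = (\vv_1, \ldots, \vv_{B^u})$ we have $A\vv = \frac{2\alpha_t}{B^u}\sum_i J_\vtheta f(\vx_i^u;\vtheta_t)^\top \vv_i$, so the triangle inequality together with $\|J_\vtheta f(\vx_i^u;\vtheta_t)\| \le M$ gives $\|A\vv\| \le \frac{2\alpha_t M}{B^u}\sum_i\|\vv_i\| \le 2\alpha_t M\,\|\vv\|$; hence $\|A\| \le 2\alpha_t M$. (A Cauchy--Schwarz estimate of $\sum_i\|\vv_i\|$ actually sharpens this to $2\alpha_t M/\sqrt{B^u}$, but the crude bound already suffices.)

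Finally the conclusion follows from the chain rule and assumption (i). Because $H(\widetilde{\vy}) = G(\widetilde{\vtheta}_{t+1}(\widetilde{\vy}))$ with $A$ constant, $\nabla_{\widetilde{\vy}}H(\widetilde{\vy}) = A^\top \nabla_\vtheta G(\widetilde{\vtheta}_{t+1}(\widetilde{\vy}))$, and for any $\widetilde{\vy}, \widetilde{\vy}'$,
\begin{align*}
  \|\nabla_{\widetilde{\vy}}H(\widetilde{\vy}) - \nabla_{\widetilde{\vy}}H(\widetilde{\vy}')\|
  &\le \|A\|\,\|\nabla_\vtheta G(\widetilde{\vtheta}_{t+1}(\widetilde{\vy})) - \nabla_\vtheta G(\widetilde{\vtheta}_{t+1}(\widetilde{\vy}'))\| \\
  &\le \|A\|\, L_0\, \|\widetilde{\vtheta}_{t+1}(\widetilde{\vy}) - \widetilde{\vtheta}_{t+1}(\widetilde{\vy}')\| \\
  &= \|A\|\, L_0\, \|A(\widetilde{\vy}-\widetilde{\vy}')\| \\
  &\le \|A\|^2 L_0\, \|\widetilde{\vy}-\widetilde{\vy}'\|,
\end{align*}
where the second inequality is the $L_0$-Lipschitzness of $\nabla_\vtheta G$ and the equality uses the affineness of $\widetilde{\vtheta}_{t+1}$. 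Plugging in $\|A\|^2 \le 4\alpha_t^2 M^2$ yields the claimed Lipschitz constant $4\alpha_t^2 M^2 L_0$ for $\nabla_{\widetilde{\vy}}H$.

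The work here is mostly bookkeeping rather than a genuine obstacle: the real content is recognizing that the MSE loss makes $\widetilde{\vtheta}_{t+1}$ an affine function of $\widetilde{\vy}$, which collapses the Jacobian to a constant matrix and lets the two Lipschitz factors (one from $A$ acting on the argument of $G$, one from $A^\top$ on the output gradient) multiply cleanly into $\|A\|^2 L_0$. The only point requiring care is the passage from the per-example bound $\|J_\vtheta f\| \le M$ to the operator norm of the block matrix $A$; if a non-squared loss were used instead, $\widetilde{\vtheta}_{t+1}$ would be nonlinear in $\widetilde{\vy}$ and bounding $\nabla_{\widetilde{\vy}}H$ would additionally require controlling the Hessian of $f$, which is exactly what this choice of loss avoids.
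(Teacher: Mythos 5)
Your proposal is correct and follows essentially the same route as the paper: both differentiate through the single SGD step, use the MSE structure so that $\nabla_{\vtheta}\frac{\partial\mathcal{L}}{\partial\widetilde{\evy}_{i,j}} = -2\nabla_{\vtheta}f_{j}(\vx_{i}^{u};\vtheta_{t})$ is independent of $\widetilde{\vy}$ (your ``affine with constant Jacobian $A$'' observation is exactly why the second term in the paper's entrywise difference decomposition vanishes), bound $\|\widetilde{\vtheta}_{t+1}^{1}-\widetilde{\vtheta}_{t+1}^{2}\|\le 2\alpha_{t}M\|\widetilde{\vy}^{1}-\widetilde{\vy}^{2}\|$, and compose with the $L_{0}$-Lipschitzness of $\nabla_{\vtheta}G$ to get $4\alpha_{t}^{2}M^{2}L_{0}$. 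Your global operator-norm packaging is a slightly cleaner presentation of the same argument, and your remark that the bound sharpens by a factor of $\sqrt{B^{u}}$ is consistent with the paper's constant being an upper bound rather than tight.
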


\begin{proof}
    Recall the SGD update formula
    \begin{equation}
        \widetilde{\vtheta}_{t+1} = \vtheta_{t} - \frac{\alpha_{t}}{B^{u}} \sum_{i=1}^{B^{u}}
        \nabla_{\vtheta} \mathcal{L}(\vx_{i}^{u}, \widetilde{\vy}_{i}; \vtheta_{t}),
    \end{equation}
    and we have
    \begin{equation}
        \frac{\partial \widetilde{\theta}_{t+1,l}}{\partial \widetilde{\evy}_{i,j}}
        = -\frac{\alpha_{t}}{B^{u}} \frac{\partial^{2} \mathcal{L}}{\partial
        \widetilde{\evy}_{i,j} \partial \theta_{l}}(\vx_{i}^{u}, \widetilde{\vy}_{i}; \vtheta_{t}).
    \end{equation}
    Then, we expand the partial derivative of each entry $\widetilde{\evy}_{i,j}$:
    \begin{equation}
        \begin{split}
            \frac{\partial H}{\partial \widetilde{\evy}_{i,j}} =& 
            \frac{1}{N^{l}} \sum_{k=1}^{N^{l}} \sum_{l}
            \frac{\partial \mathcal{L}}{\partial \evtheta_{l}}(\vx_{k}^{l}, \vy_{k}; \widetilde{\vtheta}_{t+1})
            \frac{\partial \widetilde{\evtheta}_{t+1,l}}{\partial \widetilde{\evy}_{i,j}} \\
            =& -\frac{\alpha_{t}}{B^{u}N^{l}} \sum_{k=1}^{N^{l}} \sum_{l}
            \frac{\partial \mathcal{L}}{\partial \evtheta_{l}}(\vx_{k}^{l}, \vy_{k}; \widetilde{\vtheta}_{t+1})
            \frac{\partial^{2} \mathcal{L}}{\partial \widetilde{\evy}_{i,j} \partial \evtheta_{l}}
            (\vx_{i}^{u}, \widetilde{\vy}_{i}; \vtheta_{t}) \\
            =& -\frac{\alpha_{t}}{B^{u}N^{l}} \sum_{k=1}^{N^{l}} \nabla_{\vtheta}^{\top} \mathcal{L}
            (\vx_{k}^{l}, \vy_{k}; \widetilde{\vtheta}_{t+1}) \cdot
            \nabla_{\vtheta} \frac{\partial \mathcal{L}}{\partial \widetilde{\evy}_{i,j}}
            (\vx_{i}^{u}, \widetilde{\vy}_{i}; \vtheta_{t}) \\
            =& -\frac{\alpha_{t}}{B^{u}} \nabla_{\vtheta}^{\top} G(\widetilde{\vtheta}_{t+1}) \cdot
            \nabla_{\vtheta} \frac{\partial \mathcal{L}}{\partial \widetilde{\evy}_{i,j}}
            (\vx_{i}^{u}, \widetilde{\vy}_{i}; \vtheta_{t}).
        \end{split}
        \label{eq:nabla-y}
    \end{equation}
    Then, for arbitrary $\widetilde{\vy}^{1}$ and $\widetilde{\vy}^{2}$,
    \begin{equation}
        \begin{split}
            &\left. \frac{\partial H}{\partial \widetilde{\evy}_{i,j}}
            \right|_{\widetilde{\vy}=\widetilde{\vy}^{1}} - 
            \left. \frac{\partial H}{\partial \widetilde{\evy}_{i,j}}
            \right|_{\widetilde{\vy}=\widetilde{\vy}^{2}} \\
            =& \frac{\alpha_{t}}{B^{u}} \left(
            \nabla_{\vtheta}^{\top} G(\widetilde{\vtheta}_{t+1}^{2})
            \cdot \nabla_{\vtheta}
            \frac{\partial \mathcal{L}}{\partial \widetilde{\evy}_{i,j}}
            (\vx_{i}^{u}, \widetilde{\vy}_{i}^{2}; \vtheta_{t})
            -\nabla_{\vtheta}^{\top} G(\widetilde{\vtheta}_{t+1}^{1})
            \cdot \nabla_{\vtheta} \frac{\partial \mathcal{L}}{\partial \widetilde{\evy}_{i,j}}
            (\vx_{i}^{u}, \widetilde{\vy}_{i}^{1}; \vtheta_{t}) \right) \\
            =& \frac{\alpha_{t}}{B^{u}} \left( \nabla_{\vtheta}^{\top}
            \frac{\partial \mathcal{L}}{\partial \widetilde{\evy}_{i,j}}
            (\vx_{i}^{u}, \widetilde{\vy}_{i}^{2}; \vtheta_{t}) \cdot
            \left( \nabla_{\vtheta} G(\widetilde{\vtheta}_{t+1}^{2}) -
            \nabla_{\vtheta} G(\widetilde{\vtheta}_{t+1}^{1}) \right) \right. + \\
            &\hspace{40pt} \left. \nabla_{\vtheta}^{\top}G(\widetilde{\vtheta}_{t+1}^{1}) \cdot
            \left( \nabla_{\vtheta}
            \frac{\partial \mathcal{L}}{\partial \widetilde{\evy}_{i,j}}
            (\vx_{i}^{u}, \widetilde{\vy}_{i}^{2}; \vtheta_{t})
            - \nabla_{\vtheta} \frac{\partial \mathcal{L}}{\partial \widetilde{\evy}_{i,j}}
            (\vx_{i}^{u}, \widetilde{\vy}_{i}^{1}; \vtheta_{t}) \right)
            \right),
        \end{split}
    \end{equation}
    where $\widetilde{\vtheta}_{t+1}^{r} = \widetilde{\vtheta}_{t+1}(\widetilde{\vy}^{r}),~r=1,2$.
    As the MSE loss is used for unlabeled data, we have
    $\frac{\partial \mathcal{L}}{\partial \widetilde{\evy}_{i,j}}(\vx_{i}^{u}, \widetilde{\vy}_{i}; \vtheta_{t}) = -2(f_{j}(\vx_{i}^{u};\vtheta_{t}) - \widetilde{\evy}_{i,j})$.
    Here, $f_{j}$ denotes the $j^{th}$ entry of $f$.
    Therefore,
    \begin{equation}
        \begin{split}
            \left. \frac{\partial H}{\partial \widetilde{\evy}_{i,j}}
            \right|_{\widetilde{\vy} = \widetilde{\vy}^{1}} - 
            \left. \frac{\partial H}{\partial \widetilde{\evy}_{i,j}}
            \right|_{\widetilde{\vy}=\widetilde{\vy}^{2}} 
            &= -\frac{2\alpha_{t}}{B^{u}} \nabla_{\vtheta}^{\top}
            f_{j}(\vx_{i}^{u}; \vtheta_{t}) \cdot
            \left( \nabla_{\vtheta} G(\widetilde{\vtheta}_{t+1}^{2}) -
            \nabla_{\vtheta} G(\widetilde{\vtheta}_{t+1}^{1}) \right), \\
            \nabla_{\widetilde{\vy}_{i}} H(\widetilde{\vy}^{1}) -
            \nabla_{\widetilde{\vy}_{i}} H(\widetilde{\vy}^{2})
            &= -\frac{2\alpha_{t}}{B^{u}} J_{\vtheta}
            f(\vx_{i}^{u}; \vtheta_{t}) \cdot
            \left( \nabla_{\vtheta} G(\widetilde{\vtheta}_{t+1}^{2}) -
            \nabla_{\vtheta} G(\widetilde{\vtheta}_{t+1}^{1}) \right).
        \end{split}
    \end{equation}
    By taking the norm, we have
    \begin{equation}
        \left\|\nabla_{\widetilde{\vy}_{i}} H(\widetilde{\vy}^{1}) -
        \nabla_{\widetilde{\vy}_{i}} H(\widetilde{\vy}^{2}) \right\| \leq \frac{2\alpha_{t}}{B^{u}} \left\| J_{\vtheta}
        f(\vx_{i}^{u}; \vtheta_{t}) \right\|
        \left\| \nabla_{\vtheta} G(\widetilde{\vtheta}_{t+1}^{2}) -
        \nabla_{\vtheta} G(\widetilde{\vtheta}_{t+1}^{1}) \right\|.
    \end{equation}
    By assumptions, we have
    \begin{equation}
      \begin{split}
        \left\| J_{\vtheta}f(\vx_{i}^{u}; \vtheta_{t}) \right\| &\leq M,\\
        \left\| \nabla_{\vtheta} G(\widetilde{\vtheta}_{t+1}^{2}) -
        \nabla_{\vtheta} G(\widetilde{\vtheta}_{t+1}^{1}) \right\| &\leq
        L_{0} \left\| \widetilde{\vtheta}_{t+1}^{2} - \widetilde{\vtheta}_{t+1}^{1} \right\|.
      \end{split}
    \end{equation}
    Considering
    \begin{equation}
      \begin{split}
        \left\| \widetilde{\vtheta}_{t+1}^{2} - \widetilde{\vtheta}_{t+1}^{1} \right\|
        &= \frac{\alpha_{t}}{B^{u}} \left\| \sum_{i=1}^{B^{u}}
        \left(\nabla_{\vtheta}\mathcal{L}(\vx_{i}^{u},\widetilde{\vy}_{i}^{2};\vtheta_{t}) -
        \nabla_{\vtheta}\mathcal{L}(\vx_{i}^{u},\widetilde{\vy}_{i}^{1};\vtheta_{t})\right)\right\| \\
        &\leq \frac{\alpha_{t}}{B^{u}} \sum_{i=1}^{B^{u}} \left\|
        \nabla_{\vtheta}\mathcal{L}(\vx_{i}^{u},\widetilde{\vy}_{i}^{2};\vtheta_{t}) -
        \nabla_{\vtheta}\mathcal{L}(\vx_{i}^{u},\widetilde{\vy}_{i}^{1};\vtheta_{t})\right\| \\
        &= \frac{2\alpha_{t}}{B^{u}} \sum_{i=1}^{B^{u}} \left\| J_{\vtheta}f(\vx_{i}^{u};\vtheta_{t})
        \cdot (\widetilde{\vy}_{i}^{1} - \widetilde{\vy}_{i}^{2}) \right\| \\
        &\leq \frac{2\alpha_{t}}{B^{u}} \sum_{i=1}^{B^{u}} \left\| J_{\vtheta}f(\vx_{i}^{u};\vtheta_{t}) \right\|
        \left\|\widetilde{\vy}_{i}^{1} - \widetilde{\vy}_{i}^{2}\right\| \\
        &\leq 2 \alpha_{t} M \left\|\widetilde{\vy}^{1} - \widetilde{\vy}^{2}\right\|,
      \end{split}
    \end{equation}
    thus we have
    \begin{equation}
      \begin{split}
        \left\| \nabla_{\widetilde{\vy}_{i}} H(\widetilde{\vy}^{1}) -
        \nabla_{\widetilde{\vy}_{i}} H(\widetilde{\vy}^{2}) \right\|
        &\leq \frac{4 \alpha_{t}^{2} M^{2} L_{0}}{B^{u}} \left\|\widetilde{\vy}^{1}
        - \widetilde{\vy}^{2}\right\|, \\
        \left\| \nabla_{\widetilde{\vy}} H(\widetilde{\vy}^{1}) -
        \nabla_{\widetilde{\vy}} H(\widetilde{\vy}^{2}) \right\|
        &\leq \sum_{i=1}^{B^{u}} \left\| \nabla_{\widetilde{\vy}_{i}} H(\widetilde{\vy}^{1}) -
        \nabla_{\widetilde{\vy}_{i}} H(\widetilde{\vy}^{2}) \right\|
        \leq 4 \alpha_{t}^{2} M^{2} L_{0} \left\|\widetilde{\vy}^{1} - \widetilde{\vy}^{2} \right\|.
      \end{split}
      \
      \label{eqn:lipschitz_bound}
    \end{equation}
    Therefore, $\nabla_{\widetilde{\vy}} H$ is Lipschitz-continuous
    with a Lipschitz constant $L_{t} \leq 4 \alpha_{t}^{2} M^{2} L_{0}$.
\end{proof}

\subsection{Proof of \cref{thm:convergence}}

\begin{proof}
  According to the Lagrange Mean Value Theorem, there exists $\xi \in (0, 1)$, such that 
  \begin{equation}
    H(\widehat{\vy}) = H(\widetilde{\vy})
    + \nabla_{\widetilde{\vy}}^{\top} H(\widetilde{\vy} + \xi (\widehat{\vy}-\widetilde{\vy}))
    \cdot (\widehat{\vy}-\widetilde{\vy}).
    \label{eqn:proof_thm1}
  \end{equation}
  Recall the update formula of the pseudo-targets, \ie
  $\widehat{\vy} = \widetilde{\vy} - \beta_{t} \nabla \widetilde{\vy}$.
  Then, by the Lipschitz-continuity of $\nabla_{\widetilde{\vy}} H$, we have
  \begin{equation}
    \begin{split}
      H(\widehat{\vy}) &= H(\widetilde{\vy}) - \beta_{t}
      \nabla_{\widetilde{\vy}}^{\top} H(\widetilde{\vy} - \xi \beta_{t} \nabla \widetilde{\vy}) \cdot
      \nabla \widetilde{\vy} \\
      &= H(\widetilde{\vy})- \beta_{t} \nabla_{\widetilde{\vy}}^{\top} H(\widetilde{\vy}) \cdot
      \nabla \widetilde{\vy} - \beta_{t} \left(\nabla_{\widetilde{\vy}}^{\top} H(\widetilde{\vy}
      - \xi \beta_{t} \nabla \widetilde{\vy}) - \nabla_{\widetilde{\vy}}^{\top} H(\widetilde{\vy})\right)
      \cdot \nabla \widetilde{\vy} \\
      &\leq  H(\widetilde{\vy}) - \beta_{t}
      \nabla_{\widetilde{\vy}}^{\top} H(\widetilde{\vy}) \cdot \nabla \widetilde{\vy} +
      \beta_{t}^{2} L_t \|\nabla \widetilde{\vy}\|_{2}^{2}
      \hspace{87pt} (\mbox{By}~\eqref{eqn:lipschitz_bound}) \\
      &= H(\widetilde{\vy}) - (\beta_t-\beta_t^2 L_t) ||\nabla \widetilde{\vy}||^2
      \hspace{80.5pt} (\mbox{Since}~\nabla \widetilde{\vy} = \nabla_{\widetilde{\vy}} H(\widetilde{\vy})) \\
      &\leq H(\widetilde{\vy}).
      \hspace{195.3pt} (\mbox{Since}~\beta_{t} <  L_{t}^{-1})
    \end{split}
  \end{equation}
  Therefore, $G(\vtheta_{t+1}) = H(\widehat{\vy}) \leq H(\widetilde{\vy}) = G(\vtheta_{t})$.
 
  Moreover, as long as $\alpha_{t}^{2} \beta_{t} < (4 M^{2} L_{0})^{-1}$ is satisfied,
  the equality holds if and only if $\nabla \widetilde{\vy} = \bm{0}$.
\end{proof}

\subsection{Proof of \cref{thm:convergence-rate}}

\begin{proof}
  According to \eqref{eqn:proof_thm1} in the proof of \cref{thm:convergence},
  we have
  \begin{equation}
    G(\vtheta_{t+1}) \leq G(\vtheta_{t}) - (\beta_{t} - \beta_{t}^{2} L_{t})
    \left\| \nabla \widetilde{\vy}_{t} \right\|^{2}
    \leq G(\vtheta_{t}) - (\beta_{t} - 4 \alpha_{t}^{2}\beta_{t}^{2} M^{2} L_{0}) \left\|
    \nabla \widetilde{\vy}_{t} \right\|^{2}.
  \end{equation}
  Therefore,
  \begin{equation}
    G(\vtheta_{t}) - G(\vtheta_{t+1}) \geq (\beta_{t} - 4 \alpha_{t}^{2}\beta_{t}^{2} M^{2} L_{0})
    \left\| \nabla \widetilde{\vy}_{t} \right\|^{2} \geq D_{1} \left\| \nabla \widetilde{\vy}_{t} \right\|^{2}.
  \end{equation}
  By taking the expectation, we have
  \begin{equation}
    \E_{1 \sim t} \left[ G(\vtheta_{t}) \right] - \E_{1 \sim t} \left[ G(\vtheta_{t+1}) \right]
    \geq D_1 \E_{1 \sim t} \left[ \left\| \nabla \widetilde{\vy}_{t} \right\|^{2} \right].
    \label{equ:expectation}
  \end{equation}
  Here, $\E_{1 \sim t}$ indicates the expectation is taken over the selected
  mini-batches of the first $t$ steps.
  Next, we show $\E_{1 \sim t} \left[ G(\vtheta_{t}) \right] = \E_{1 \sim t-1} \left[ G(\vtheta_{t}) \right]$,
  which is intuitive as the value of $\vtheta_{t}$ only relies on the selected batches of the first $t-1$ steps.
  We rigorously prove it with conditional expectation:
  \begin{equation}\label{equ:conditional}
    \E_{1 \sim t} \left[ G(\vtheta_{t}) \right]
    = \E_{1 \sim t-1} \left[ \E_{t} \left[ G(\vtheta_{t}) | 1 \sim t-1 \right] \right]
    = \E_{1 \sim t-1} \left[ G(\vtheta_{t}) \right].
  \end{equation}
  Here, the first equality comes from the \textit{law of total expectation}, while
  the second comes from the fact that $G(\vtheta_{t})$ is deterministic
  given the selected batches of the first $t-1$ steps.
  Besides, when $t = 1$, \eqref{equ:expectation} is adapted to
  \begin{equation}
    G(\vtheta_{1}) - \E_{1} \left[ G(\vtheta_{2}) \right]
    \geq D_1 \E_{1} \left[ \left\| \nabla \widetilde{\vy}_{1} \right\|^{2} \right],
    \label{equ:expectation-t=1}
  \end{equation}
  where $G(\vtheta_{1})$ is the loss of the initialized model parameters
  so the expectation is omitted.
  Then, by taking a summation over the first $T$ steps, we have
  \begin{equation}
    D_1 \sum_{t=1}^{T} \E_{1 \sim t} \left[ \left\| \nabla \widetilde{\vy}_{t} \right\|^{2} \right]
    \leq G(\vtheta_{1}) - \E_{1 \sim T} \left[ G(\vtheta_{T+1}) \right]
    \leq G(\vtheta_{1}).
  \end{equation}
  Therefore, there exists $\tau \in \{1, \cdots, T\}$, s.t.
  \begin{equation}
    \E_{1 \sim \tau} \left[ \left\| \nabla \widetilde{\vy}_{\tau} \right\|^{2} \right] \leq \frac{G(\vtheta_{1})}{D_{1}T}.
    \label{equ:deltay}
  \end{equation}
  Then, we attempt to build a relationship between $\nabla \widetilde{\vy}_{\tau}$
  and $\nabla_{\vtheta}G(\vtheta_{\tau})$.
  Similar to \cref{eq:nabla-y}, we have
  \begin{equation}
    \nabla \widetilde{\vy}_{i,\tau}
    = -\frac{\alpha_{\tau}}{B^{u}} \nabla_{\widetilde{\vy}_{i},\vtheta}^{2} \mathcal{L}(\vx_{i}^{u}, \widetilde{\vy}_{i}; \vtheta_{\tau}) \cdot \nabla_{\vtheta} G(\vtheta_{\tau})
    = \frac{2\alpha_{\tau}}{B^{u}} J_{\vtheta}f(\vx_{i}^{u}; \vtheta_{\tau}) \cdot \nabla_{\vtheta} G(\vtheta_{\tau}).
  \end{equation}
  Therefore,
  \begin{equation}
      \left\| \nabla \widetilde{\vy}_{\tau} \right\|^{2}
      = \sum_{i=1}^{B^{u}} \nabla^{\top} \widetilde{\vy}_{i,\tau} \cdot
      \nabla \widetilde{\vy}_{i,\tau}
      = \frac{4\alpha_{\tau}^{2}}{(B^{u})^{2}} \nabla_{\vtheta}^{\top} G(\vtheta_{\tau})
      \cdot \left( \sum_{i=1}^{B^{u}} J^{\top}_{\vtheta}f(\vx_{i}^{u}; \vtheta_{\tau})
      \cdot J_{\vtheta}f(\vx_{i}^{u}; \vtheta_{\tau}) \right) \cdot
      \nabla_{\vtheta} G(\vtheta_{\tau}).
  \end{equation}
  Now consider the potential unlabeled batches $\{\mB_{k} : k = 1, \cdots, N^{l}\}$
  of the $\tau^{th}$ step.
  Since, $\mathcal{D}^{l} \subseteq \mathcal{D}^{u}$, we can assume
  $\vx_{k}^{l} \in \mB_{k},~k = 1, \cdots, N^{l}$ and these batches are sampled with non-zero probabilities $\{p_{k} : k = 1, \cdots, N^{l}\}$.
  Let $p = \min_{k} p_{k} > 0$, and we have
\begin{equation}
\begin{split}
  \E_{1 \sim \tau} \left[ \left\| \nabla \widetilde{\vy}_{\tau} \right\|^{2} \right] 
  &= \E_{1 \sim \tau-1} \left[ \E_{\tau} \left[ \left\| \nabla \widetilde{\vy}_{\tau} \right\|^{2} \right] \right] \\
  &= \E_{1 \sim \tau-1} \left[ \frac{4\alpha_{\tau}^{2}}{(B^{u})^{2}} \nabla_{\vtheta}^{\top} G(\vtheta_{\tau})
  \cdot \E_{\tau} \left[ \sum_{i=1}^{B^{u}} J^{\top}_{\vtheta}f(\vx_{i}^{u}; \vtheta_{\tau})
  \cdot J_{\vtheta}f(\vx_{i}^{u}; \vtheta_{\tau}) \right] \cdot \nabla_{\vtheta} G(\vtheta_{\tau}) \right] \\
  &\geq \E_{1 \sim \tau-1} \left[ \frac{4\alpha_{\tau}^{2}}{(B^{u})^{2}} \nabla_{\vtheta}^{\top} G(\vtheta_{\tau})
  \cdot \left( \sum_{k=1}^{N^{l}} p_{k} \, J^{\top}_{\vtheta}f(\vx_{k}^{l}; \vtheta_{\tau})
  \cdot J_{\vtheta}f(\vx_{k}^{l}; \vtheta_{\tau}) \right) \cdot \nabla_{\vtheta} G(\vtheta_{\tau}) \right] \\
  &\geq \frac{4pD_{2}^{2}}{(B^{u})^{2}} \E_{1 \sim \tau-1} \left[ \nabla_{\vtheta}^{\top} G(\vtheta_{\tau})
  \cdot \left( \sum_{k=1}^{N^{l}} J^{\top}_{\vtheta}f(\vx_{k}^{l}; \vtheta_{\tau})
  \cdot J_{\vtheta}f(\vx_{k}^{l}; \vtheta_{\tau}) \right) \cdot \nabla_{\vtheta} G(\vtheta_{\tau}) \right].
\end{split}
\label{equ:total-expectation}
\end{equation}

  Note that similar to \cref{equ:conditional}, the inner expectation is also conditioned
  on the selected batches of the first $\tau - 1$ steps, which is equivalent to that
  conditioned on $\vtheta_{t}$.
  
  By applying the chain rule, we have
  \begin{equation}
    \nabla_{\vtheta} G(\vtheta) = \frac{2}{N^{l}} \sum_{k=1}^{N^{l}} J^{\top}_{\vtheta}f(\vx_{k}^{l}; \vtheta) \cdot
    \left( f(\vx_{k}^{l}; \vtheta) - \vy_{k} \right).
  \end{equation}
  Since both $f(\vx_{k}^{l}; \vtheta)$ and $\vy_{k}$ are distributions on the category space,
  there exists a constant $R > 0$, s.t. $\left\| f(\vx_{k}^{l}; \vtheta) - \vy_{k} \right\| \leq R$.
  Therefore,
  \begin{equation}
    \begin{split}
      &\,\sum_{k=1}^{N^{l}} J^{\top}_{\vtheta}f(\vx_{k}^{l}; \vtheta_{\tau})
      \cdot J_{\vtheta}f(\vx_{k}^{l}; \vtheta_{\tau}) \\
      \succeq&\, \frac{1}{R^2} \sum_{k=1}^{N^{l}} J^{\top}_{\vtheta}f(\vx_{k}^{l};
      \vtheta_{\tau}) \cdot (f(\vx_{k}^{l}; \vtheta_{\tau}) - \vy_{k}) \cdot
      (f(\vx_{k}^{l}; \vtheta_{\tau}) - \vy_{k})^{\top} \cdot J_{\vtheta}f(\vx_{k}^{l}; \vtheta_{\tau}) \\
      \succeq&\, \frac{1}{N^{l} R^2} \left( \sum_{k=1}^{N^{l}} J^{\top}_{\vtheta}f(\vx_{k}^{l};
      \vtheta_{\tau}) \cdot (f(\vx_{k}^{l}; \vtheta_{\tau}) - \vy_{k}) \right) \cdot
      \left( \sum_{k=1}^{N^{l}} J^{\top}_{\vtheta}f(\vx_{k}^{l};
      \vtheta_{\tau}) \cdot (f(\vx_{k}^{l}; \vtheta_{\tau}) - \vy_{k}) \right)^{\top} \\
      =&\, \frac{N^{l}}{4R^2} \nabla_{\vtheta} G(\vtheta_{\tau}) \cdot \nabla_{\vtheta}^{\top} G(\vtheta_{\tau}).
    \end{split}
    \label{equ:cauchy}
  \end{equation}
  
  Here, the symbol $\succeq$ indicates certain matrix relationship
  where $\mA \succeq \mB$ means $\mA - \mB$ is a positive semidefinite matrix.
  
  We prove the first inequality in \eqref{equ:cauchy} with simplified notations.
  Suppose $\vv$ is a vector and $\mA$ is a matrix of proper dimension.
  Then, we show that if $\left\| \vv \right\| \leq R$, then
  $R^{2} \mA^{\top} \mA \succeq \mA^{\top} \vv \vv^{\top} \mA$.
  For an arbitrary vector $\vu$ of proper dimension, we have
  \begin{equation}
      \vu^{\top} \mA^{\top} \vv \vv^{\top} \mA \vu = \left\| \vv^{\top} \mA \vu \right\|^{2} 
      \leq \left\| \vv \right\|^{2} \left\| \mA \vu \right\|^{2}
      \leq R^{2} \left\| \mA \vu \right\|^{2}
      = R^{2} \vu^{\top} \mA^{\top} \mA \vu.
  \end{equation}
  By definition, $R^{2} \mA^{\top} \mA - \mA^{\top} \vv \vv^{\top} \mA$ is positive semidefinite.
  The second inequality in \eqref{equ:cauchy} comes from the Cauchy-Schwartz inequality that
  $\E \left[ \mA^{\top} \mA \right] \succeq \E \left[ \mA^{\top} \right] \E \left[ \mA \right]$
  for any random matrix $\mA$.
  
  With \eqref{equ:total-expectation} and \eqref{equ:cauchy}, it is easy to show that
  \begin{equation}\label{equ:cauchy2}
    \E_{1 \sim \tau} \left[ \left\| \nabla \widetilde{\vy}_{\tau} \right\|^{2} \right]
    \geq \frac{pD_{2}^{2} N^{l}}{(B^{u})^{2}R^{2}} \E_{1 \sim \tau-1}
    \left[ \left\| \nabla_{\vtheta} G(\vtheta_{\tau}) \right\|^{4} \right]
    \geq \frac{pD_{2}^{2} N^{l}}{(B^{u})^{2}R^{2}} \left( \E_{1 \sim \tau-1}
    \left[ \left\| \nabla_{\vtheta} G(\vtheta_{\tau}) \right\|^{2} \right] \right)^{2}.
  \end{equation}
  Again, the second inequality comes from the Cauchy-Schwartz inequality.
  Incorporating with \eqref{equ:deltay}, we have
  \begin{equation}
    \E_{1 \sim \tau-1}
    \left[ \left\| \nabla_{\vtheta} G(\vtheta_{\tau}) \right\|^{2} \right]
    \leq \frac{C}{\sqrt{T}},
    \quad
    \mbox{where}~C = \frac{B^{u} R}{D_2} \sqrt{\frac{G(\vtheta_{1})}{p N^l D_{1}}}.
  \end{equation}
  which concludes this proof.
\end{proof}

\section{Implementation Details}
Our implementation is based on the PyTorch \citep{steiner2019pytorch} library and
the proposed algorithm is evaluated on the SVHN \citep{netzer2011reading},
CIFAR \citep{krizhevsky2009learning}, and ImageNet \citep{ILSVRC15} datasets.

\paragraph{Evaluation on the SVHN and CIFAR datasets.}
As the standard evaluation protocol, 1k category-balanced labels are used
for supervision out of the 73,257 training examples of the SVHN dataset.
For the CIFAR-10 (\resp CIFAR-100) dataset, the number of labeled examples is 4k
(\resp 10k) out the 50k training examples.
For the backbone architectures, the Conv-Large architecture is the same as the one in previous work
\citep{Laine2017temporal,miyato2018virtual,tarvainen2017mean,athiwaratkun2018there,wang2019semi}.
The detailed configurations are summarized in \cref{tab:convlarge-arch}.
For the ResNet \citep{he2016deep} architecture,
we adopt the ResNet-26-2x96d Shake-Shake regularized architecture with 12 residual blocks
as in \citet{Gastaldi17ShakeShake}.
The same architecture is used in prior SSL methods \citep{tarvainen2017mean,athiwaratkun2018there}.
We follow a common practice of data augmentation, \ie
zero-padding of 4 pixels on each side of the image,
random crop of a $32 \times 32$ patch, and random horizontal flip,
for the CIFAR datasets, and omit the random horizontal flip for SVHN.
The meta learning rate $\beta_{t}$ is always set equal to
the regular learning rate $\alpha_{t}$.
We train from scratch for 400k iterations with an initial learning rate of 0.1,
and decay the learning rate by a factor of 10 at the end of 300k and 350k iterations.
We use the SGD optimizer with a momentum of 0.9,
and the weight decay is set to $10^{-4}$ for the CIFAR datasets,
and $5 \times 10^{-5}$ for SVHN.
The batch size is 128 for both labeled and unlabeled data.
The shape parameter $\gamma$ of the Beta distribution is set to 1.0 for the CIFAR
datasets, and 0.1 for SVHN, as suggested by \citet{wang2019semi}.

\paragraph{Evaluation on the ImageNet dataset.}
The large-scale ImageNet benchmark contains 1.28M training images of 1k fine-grained
classes.
We evaluate on the ResNet-18 \citep{he2016deep} backbone with 10\% labels.
The standard data augmentation strategy
\citep{vgg,he2016deep,xie2017aggregated} is adopted:
image resize such that the shortest edge is of 256 pixels,
random crop of a $224 \times 224$ patch, and random horizontal flip.
The overall batch size is 512, and the same optimizer as the aforementioned one
is employed with a weight decay of $10^{-4}$.
We train for 600 epochs in total, and decay the learning rate from 0.1
according to the cosine annealing strategy \citep{loshchilov-ICLR17SGDR}.
The shape parameter $\gamma$ is set to 1.0.

\begin{table*}[!h]
  \renewcommand{\arraystretch}{1.2}
  \centering
  \caption{
  Conv-Large \citep{tarvainen2017mean} Architecture.
  }
  % \resizebox{0.60\textwidth}{!}{
  \begin{tabular}{cccccc}
      \Xhline{1pt}
      \multirow{2}{*}{\textbf{Layer}} & \multicolumn{4}{c}{\textbf{Configurations}} &
      \multirow{2}{*}{\textbf{Output Size}} \\
      & \#Filters & Kernel Size & Stride & \#Paddings & \\ \hline
      Convolution & 128 & 3 & 1 & 1 & $32 \times 32$ \\
      Convolution & 128 & 3 & 1 & 1 & $32 \times 32$ \\
      Convolution & 128 & 3 & 1 & 1 & $32 \times 32$ \\
      MaxPooling & 128 & 2 & 2 & 0 & $16 \times 16$ \\
      Dropout & \multicolumn{4}{c}{$\text{Drop probability}=0.5$} & $16 \times 16$ \\
      Convolution & 256 & 3 & 1 & 1 & $16 \times 16$ \\
      Convolution & 256 & 3 & 1 & 1 & $16 \times 16$ \\
      Convolution & 256 & 3 & 1 & 1 & $16 \times 16$ \\
      MaxPooling & 128 & 2 & 2 & 0 & $8 \times 8$ \\
      Dropout & \multicolumn{4}{c}{$\text{Drop probability}=0.5$} & $8 \times 8$ \\
      Convolution & 512 & 3 & 1 & 0 & $6 \times 6$ \\
      Convolution & 256 & 1 & 1 & 0 & $6 \times 6$ \\
      Convolution & 128 & 1 & 1 & 0 & $6 \times 6$ \\
      AvgPooling & 128 & 6 & 1 & 0 & $1 \times 1$ \\
      Linear & \multicolumn{4}{c}{$128 \rightarrow 10$} & $1 \times 1$ \\
      \Xhline{1pt}
  \end{tabular}
  % }
  \label{tab:convlarge-arch}
\end{table*}

\end{document}